%% file: main_arXiv_paper.tex
\documentclass{article}
\PassOptionsToPackage{numbers, compress}{natbib}
\usepackage[preprint]{neurips_2026}

\usepackage[utf8]{inputenc} 
\usepackage[T1]{fontenc}    
\usepackage{hyperref}       
\usepackage{url}            
\usepackage{booktabs}       
\usepackage{amsfonts}       
\usepackage{nicefrac}       
\usepackage{microtype}      
\usepackage{xcolor}         
\input{math_commands}

\usepackage{algorithm}
\usepackage{algorithmic}

\usepackage{enumitem}
\usepackage{amsthm}
\newtheorem{theorem}{Theorem}[section]
\newtheorem{lemma}[theorem]{Lemma}
\newtheorem{proposition}[theorem]{Proposition}
\newtheorem{corollary}[theorem]{Corollary}
\newtheorem{definition}[theorem]{Definition}
\newtheorem{assumption}[theorem]{Assumption}

\usepackage{xspace}
\makeatletter
\DeclareRobustCommand\onedot{\futurelet\@let@token\@onedot}
\def\@onedot{\ifx\@let@token.\else.\null\fi\xspace}

\def\ie{\emph{i.e}\onedot}

\def\wrt{w.r.t\onedot} 

\makeatother

\title{Theoretical Grounding of Out-Of-Distribution Detection With Reinforcement Learning Optimizer}

\author{
  Salimeh Sekeh\thanks{Equal contribution.} \\
  San Diego State University \\
  \texttt{ssekeh@sdsu.edu}
  \And
  Xin Zhang\footnotemark[1] \\
  San Diego State University \\
  \texttt{xzhang19@sdsu.edu}
}

\begin{document}

\maketitle

\input{0_abstract}

\input{1_introduction}

\input{2_related}

\input{3_method}

\input{4_theory}

\input{5_conclusion}








\input{output.bbl}
\newpage
\input{6_appendix}




\end{document}

%% file: math_commands.tex
\usepackage{amsmath,amsfonts,bm}

\def\eqref#1{eq.~(\ref{#1})}
\def\Eqref#1{Eq.~(\ref{#1})}

\def\1{\bm{1}}

\DeclareMathAlphabet{\mathsfit}{\encodingdefault}{\sfdefault}{m}{sl}
\SetMathAlphabet{\mathsfit}{bold}{\encodingdefault}{\sfdefault}{bx}{n}



%% file: 0_abstract.tex
\begin{abstract}
 Out-of-distribution (OOD) detection in dynamic open-world environments requires a model to continually adapt to evolving data distributions while generalizing to covariate-shifted inputs and rejecting semantic-shifted OOD examples. Most existing OOD detection methods optimize only the current-step objective and do not explicitly account for how post-deployment environment changes affect future OOD behavior. In this paper, we establish a  theoretical grounding for dynamic OOD detection using a reinforcement learning (RL)–guided optimizer that explicitly favors updates that reduce the semantic OOD false positive rate over time. We develop a novel augmented optimizer that uses an RL-guided correction term on top of standard gradient descent (GD) and show its improvement over both future-domain generalization and semantic-OOD rejection. We analyze temporal error decomposition in terms of model-change and environment-change generalization errors and develop a new theoretical framework for comparing the generalization errors under both GD and RL-guided optimizers. 
\end{abstract}

%% file: 1_introduction.tex
\section{Introduction}

Consider an object-recognition system deployed on a mobile robot in a warehouse. 
During commissioning, the system is calibrated on summer imagery;
six months later, lighting conditions, seasonal clutter, and equipment upgrades have quietly shifted the data distribution. 
A model update that minimized last week's classification loss may have simultaneously pushed the detector's energy surface in a direction that weakens its response to genuinely anomalous inputs. 
By the time an out-of-distribution (OOD) object reaches the sensor, the detector has already degraded without any single update being obviously wrong. This failure mode highlights a fundamental challenge in dynamic OOD detection: an update that improves \emph{current-step} performance may inadvertently degrade \emph{future-domain} OOD robustness. 
Most existing OOD detection methods are developed under a static deployment assumption, where the test distribution remains fixed after training~\citep{hendrycks2016baseline, liang2018enhancing, liu2020energy}. 
Even methods that adapt at test time, such as entropy minimization~\citep{wang2020tent} and related test-time adaptation approaches~\citep{niu2022efficient}, optimize \emph{instantaneous}
objectives and do not explicitly account for how parameter updates affect the detector's behavior in subsequent environments. 
As a result, adaptation steps may gradually erode the separation between in-distribution (ID) and semantic-OOD samples, potentially leading to accumulated performance degradation, which is a phenomenon not previously formalized.
{This paper addresses the question:} \emph{can we design
an optimization framework for dynamic OOD detection that explicitly
accounts for the future-domain consequences of each update?}

A natural approach is to incorporate information about future performance into the current update. 
We formalize this idea through a value-function perspective: 
by training a value function that estimates cumulative future OOD performance via temporal-difference (TD) learning, 
we obtain a differentiable proxy for future-domain robustness that can be directly integrated into the optimization process.

Building on this insight, we introduce a \emph{reinforcement-learning (RL)-guided optimization framework} that augments standard gradient descent (GD) with a learned correction term. 
In our formulation, RL is used in a \emph{value-learning sense}: a value function, trained via TD learning, captures cumulative future OOD performance and provides a \emph{differentiable update-shaping signal} through its gradient. 
The value function operates purely through its gradient: at each update step, its gradient is added as a correction to the standard parameter update,
requiring no discrete action selection, no environment model, and no
additional inference-time overhead. 
Unlike general-purpose learned optimizers~\citep{andrychowicz2016learning, metz2019understanding}, the value function is explicitly tied to semantic-OOD false positive rate (FPR), enabling a direct theoretical connection between optimization dynamics and OOD detection performance.

\textbf{Theoretical contributions.} We provide a theoretical framework that explains how and when the RL-guided update improves dynamic OOD detection. 
The analysis proceeds in five steps.
    \textit{Temporal error decomposition} (Def.~\ref{def:temporal_decomp}). We decompose the change in future-domain generalization error into two parts: one caused by the environment shift, which is uncontrollable, 
    and one caused by the model update, which is determined by the optimizer. 
    This decomposition isolates the component that optimization can influence.
    \textit{Gradient alignment} (Lemma~\ref{lem:alignment}). We show that, under a structural condition, the update direction induced by the value function is aligned with directions that reduce future-domain generalization error.
    \textit{Model-change improvement} (Lemma~\ref{lem:model_change}). This alignment implies that the RL-guided update consistently reduces the model-induced component of temporal error compared to standard gradient descent.
    \textit{Energy bridge} (Lemma~\ref{lem:energy_bridge}). We further show that this improvement in parameter updates translates into stronger separation between ID and semantic-OOD samples, leading to lower FPR. 
    \textit{Unified main result} (Thm~\ref{thm:main}). Combining these results, we establish that the RL-guided optimizer improves both future-domain generalization and OOD detection performance relative to gradient descent, up to higher-order effects from environment drift, with the advantage growing across adaptation steps. 

Our analysis relies on structural assumptions capturing the geometric relationship between covariate-domain learning and OOD separation. 
We show that these assumptions admit interpretable instantiations and can be analyzed concretely in simplified model classes. 
In particular, we instantiate the framework on a head-only one-layer transformer~\citep{akyurek2022learning}, where the gradient-conflict condition reduces to an explicit condition on label alignment and feature-space correlation (Prop.~\ref{prop:transformer_conflict}).

Further, we provide theoretical understanding of RL-GD training regimes and analyze the behavior of the RL optimizer during training by \textit{RL-GD loss gap} (Thm~\ref{thm:loss-gap bound}) along with alternative bounds based on the gradient of energy (Cor.~\ref{corollary:0}) and training step-based gradient energy state (Cor.~\ref{corollary:1}). Finally, we show that RL-guided optimizer improves the effective gradient flow (Thm~\ref{training-EGF-Gap} and Thm~\ref{env-EGF-Gap}). 

\textit{Our contributions are listed below:}
\begin{itemize}[nosep,leftmargin=*]
    \item A temporal error decomposition that isolates the optimizer's controllable influence on future-domain OOD robustness.
    \item An RL-guided optimization framework in which a TD-trained value function provides a differentiable, temporally-aware correction signal.
    \item A theoretical guarantee that the RL-guided optimizer achieves lower future-domain generalization error \emph{and} lower semantic-OOD false positive rate than standard gradient descent, with improvements accumulating across environments.
    \item A concrete instantiation on a head-only one-layer transformer, where the gradient-conflict assumption reduces to a verifiable condition on label alignment and feature-space correlation.
    \item A theoretical analysis of the training regimes for our RL-guided optimizer, including a bound on the loss gap between RL and GD optimization, and a demonstration on how the RL-based approach improves effective gradient flow under environment changes and across training updates.
\end{itemize}

%% file: 2_related.tex
\section{Related Work}
Many OOD detection methods assume \emph{static} distributions, including post-hoc scoring approaches such as maximum softmax probability~\citep{hendrycks2016baseline}, input perturbation~\citep{liang2018enhancing}, Mahalanobis distance~\citep{lee2018simple}, and energy-based scores~\citep{liu2020energy}, as well as training-time strategies like outlier exposure~\citep{hendrycks2018deep}. 
Test-time adaptation methods~\cite{wang2020tent,niu2022efficient} address distribution shift by updating models on unlabeled data~\citep{wang2020tent, niu2022efficient}, but optimize \emph{instantaneous objectives} without considering long-term OOD behavior. 
Recent work leverages wild or mixed ID/OOD data to improve detection and generalization~\citep{hendrycks2018deep, bai2023feed}, yet typically operates in batch settings with independent updates. 
Related lines on continual learning and online adaptation study stability under sequential shifts~\citep{kirkpatrick2017overcoming, lopez2017gradient, farajtabar2020orthogonal}, while RL and meta-learning approaches learn adaptive update rules~\citep{andrychowicz2016learning, finn2017model}, and energy-based theories provide geometric interpretations of OOD detection~\citep{liu2020energy}. 
In contrast, our work provides a unified \emph{temporal perspective}, explicitly modeling how optimization dynamics influence future-domain OOD robustness. Extended discussions are in Appendix~\ref{app:related_work}.

%% file: 3_method.tex
\section{Method}\label{sec:method}

We study dynamic OOD detection in a non-stationary environment, where the detector must continually adapt to a changing mixture of ID, covariate-shifted, and semantic-OOD samples. 
Our method augments gradient descent with a learned correction term derived from a value function trained via TD learning. 
Unlike standard RL formulations that rely on explicit action selection, our framework uses RL in a \emph{value-learning sense}: the learned value function does not produce actions or define a control policy; instead, its gradient serves as a differentiable signal that shapes the parameter update. 
This allows the optimizer to incorporate information about cumulative future OOD performance without introducing discrete decisions or environment models.

\noindent{\textbf{Preliminaries: Dynamic OOD detection setup on wild data}.} At each outer step $t$, the detector encounters data drawn from a non-stationary distribution
\begin{equation}
\mathbb{P}_t^{\mathrm{wild}} = (1-\alpha_t^{\mathrm{cov}}-\alpha_t^{\mathrm{sem}})\mathbb{P}^{\mathrm{ID}}
+ \alpha_t^{\mathrm{cov}}\mathbb{P}_t^{\mathrm{Cov}}
+ \alpha_t^{\mathrm{sem}}\mathbb{P}_t^{\mathrm{Sem}},
\label{eq:wild_method}
\end{equation}
where $\mathbb{P}^{\mathrm{ID}}$ denotes the in-distribution data, $\mathbb{P}_t^{\mathrm{Cov}}$ denotes covariate-shifted OOD data that remain in the known label space, and $\mathbb{P}_t^{\mathrm{Sem}}$ denotes semantic OOD data that should be rejected. 
The detector must simultaneously: 
\textit{i)} maintain classification performance on ID and covariate-shifted samples, and 
\textit{ii)} reduce FPR on semantic OOD samples.

Let $f_\theta:\mathcal{X}\to\mathbb{R}^N$ be a classifier and let
$E_\theta(x) = -\log\sum_{y=1}^N \exp\!\big(f_\theta^{(y)}(x)\big)$
be its energy score as the OOD detection statistics.
We train the detector with an OOD-aware objective
\begin{equation}
\mathcal{L}_{OW}(\theta) = \mathcal{L}_{ID}(\theta) + \lambda_{\mathrm{Sem}}\mathcal{L}_{OOD}(\theta),
\label{eq:low_method}
\end{equation}
where $\mathcal{L}_{ID}$ is the cross-entropy on ID data and $\mathcal{L}_{OOD}$ encourages high uncertainty on semantic-OOD samples. 
Optimizing $\mathcal{L}_{\mathrm{OW}}$ alone is insufficient in a dynamic setting:
it ignores how the \emph{update direction} affects future-domain OOD performance.
In particular, updates that minimize the instantaneous objective may reduce future-domain energy separation, leading to gradual degradation in OOD detection performance. 
This observation motivates our approach: we introduce a learned correction term that captures \emph{long-term} OOD behavior and explicitly biases the optimization trajectory toward improving future-domain robustness.

\noindent{\textbf{RL-Guided Optimizer}.} We now describe how the proposed approach is realized as an optimization rule that augments standard gradient descent with a learned, temporally-aware correction.

\noindent{\textit{\underline{Pseudo-OOD partition.}}}
Since semantic-OOD labels are unavailable, we construct a pseudo semantic-OOD subset by thresholding the energy score using a percentile-based threshold calibrated from ID samples. This partition serves only as a training proxy and does not utilize ground-truth OOD labels.\\
\noindent{\textit{\underline{Soft FPR surrogate.}}}
Since the hard false-positive indicator is non-differentiable, we use a smooth surrogate
\begin{equation}
\widehat{\mathrm{FPR}}_{t,k}^{\,\mathrm{soft}}
=
\frac{1}{|B_{t,k}^{\mathrm{sem}}|}
\sum_{x\in B_{t,k}^{\mathrm{sem}}}
\sigma\!\left(\frac{\tau_t - E_{\theta_{t,k}}(x)}{\beta}\right),
\end{equation}
where $\sigma(\cdot)$ is the logistic sigmoid, $\beta>0$ controls the sharpness, and $\tau_t$ is a threshold calibrated from the current ID batch. 
This provides a differentiable approximation of FPR that is used both in the reward and in computing gradients through the state.
\\
\noindent{\textit{\underline{State representation.}}}
We use $t$ to index environment changes (outer steps) and $k$ to index the inner optimization steps used to adapt the model within each environment. 
The detector's behavior is summarized by a low-dimensional state: 
\begin{equation}
s_{t,k} =
\Big[ \mu_E^{\mathrm{id}},\; \mu_E^{\mathrm{sem}},\;
\mathrm{Var}(E^{\mathrm{id}}),\;
\mathrm{Var}(E^{\mathrm{sem}}),\;
\widehat{\mathrm{FPR}}_{t,k}^{\,\mathrm{soft}},\;
\widehat{\mathrm{Acc}}_{t,k}
\Big].
\label{eq:state_method}
\end{equation}
where $\mu_E^{\mathrm{id}}$ and $\mu_E^{\mathrm{sem}}$ denote the mean energy on ID and pseudo semantic-OOD samples, the variance terms capture uncertainty in energy estimation, and $\widehat{\mathrm{FPR}}_{t,k}^{\,\mathrm{soft}}$ and $\widehat{\mathrm{Acc}}_{t,k}$ denote the $t$-detection and classification performance. This low-dimensional state captures the key statistics for OOD separation.

\noindent{\textit{\underline{Reward.}}} We define the reward to directly reflect the OOD detection objective:
$r_{t,k} = -\widehat{\mathrm{FPR}}_{t,k}$.
This choice ensures that the value function is aligned with the theoretical objective. 
In practice, additional stabilization and accuracy terms may be included to improve optimization robustness, but they reduce to the above form when their weights vanish.
\\
\noindent{\textit{\underline{Value function.}}}
We introduce a value function $V_\phi(s_{t,k})$ that estimates cumulative future OOD performance from the current state. 
It is trained via TD learning using transitions $(s_{t,k}, r_{t,k}, s_{t,k+1})$.
RL is used here purely in a \emph{value-learning sense}: the value function does not produce actions; instead, its gradient provides a differentiable signal that shapes the optimization trajectory.
The gradient is computed via chain rule through the differentiable state:
$\nabla_\theta V_\phi(s_{t,k}) = \frac{\partial V_\phi}{\partial s_{t,k}} \cdot \frac{\partial s_{t,k}}{\partial \theta}$.

\noindent{\textit{\underline{Update rule.}}}
At time step $t$ and update step $k$ with learning rate $\eta$, the gradient descent update is 
\begin{equation}
\theta_{t,k+1}^{GD} = \theta_{t,k}^{GD} -
\eta \nabla_\theta \mathcal{L}_{OW}(\theta_{t,k}),
\label{eq:gd_update_method}
\end{equation}
Our RL-guided update augments this with a correction term:
\begin{equation}
\theta_{t,k+1}^{RL} = \theta_{t,k}^{RL}
- \eta \nabla_\theta \mathcal{L}_{OW}(\theta_{t,k})
+ \eta_{RL}\nabla_\theta V_\phi(s_{t,k}).
\label{eq:rl_update_method}
\end{equation}
This can be interpreted as locally optimizing the augmented objective
$\mathcal{L}_{OW_{RL}}(\theta)
= \mathcal{L}_{OW}(\theta) - \lambda_{RL} V_\phi(s(\theta))$.
A key structural condition underlying our analysis is that the gradient of the covariate-domain loss and the gradient of the semantic-OOD energy score are negatively aligned on average. 
Intuitively, this reflects a tension between improving covariate-domain accuracy and maintaining energy separation for semantic-OOD detection. 
We formalize this condition in Assumption~4 and verify it in a concrete model setting.
\\
Our method is a gradient-based optimizer augmented with a learned, temporally aware objective, where the value function's gradient encodes future-domain OOD behavior.



%% file: 4_theory.tex
\section{Theory}
\label{sec:theory}
We analyze why the RL-guided update improves dynamic OOD detection by isolating the optimizer-controllable component of future-domain error and showing that the value gradient systematically reduces it.
 

\noindent\textbf{Setup and Assumptions.}
Let $\theta_{t,0}$ and $\theta_{t,K}$ denote the detector parameters before and after $K$ inner updates at outer step $t$, and write ${\rm GErr}_t(f_\theta)$ for future-domain generalization error. 
For brevity, we denote $\theta_t$ as $\theta_{t,0}$, and $\theta_{t,K}$ as $\theta_{t+1}$.

\begin{definition}[Temporal error decomposition]
\label{def:temporal_decomp}
The one-step future-domain generalization error change satisfies
\begin{equation*}
{\rm GErr}_{t+1}(f_{\theta_{t+1}}) -{\rm GErr}_t(f_{\theta_t})
=
\underbrace{
{\rm GErr}_{t+1}(f_{\theta_{t+1}})-{\rm GErr}_{t+1}(f_{\theta_t})}_{\text{model-change}}
+
\underbrace{{\rm GErr}_{t+1}(f_{\theta_t}) - {\rm GErr}_t(f_{\theta_t})
}_{\text{environment-change}}.
\label{eq:theory_decomposition}
\end{equation*}
\end{definition}

Def.~\ref{def:temporal_decomp} separates the part of temporal error that is induced by external distribution drift from the part induced by the optimizer. 
Because environment-change effects are not controllable by the learning rule, our RL-guided optimizer focuses on reducing the model-change term. While the existing work~\cite{zhang2024best, temp-scone2025} analyzes bounds for environment-change generalization error, we provide a theoretical analysis of the model-change generalization error, highlighting the differences between standard gradient descent and our RL-guided approach. 


We state the assumptions needed for the analysis. 
Full discussion is deferred to Appendix~\ref{app:assumptions}.

\begin{assumption}[Optimization regularity]
\label{ass:smooth} We assume that 
$\mathcal{L}_{OW}$ and ${\rm GErr}_{t+1}$ are $\beta$-smooth and $L_G$-Lipschitz in $\theta$. 
We also assume the step sizes are small with $\eta_{\rm RL} \le \eta$, and $\eta + \eta_{\rm RL} \le \frac{1}{2\beta}$.
\end{assumption}

\begin{assumption}[Energy-FPR coupling]
\label{ass:energy-fpr}
FPR is coupled to the expected energy on semantic-OOD samples through a differentiable decreasing function $\phi$, \ie, 
${\rm FPR}_t(\theta) \le \phi(\mathbb{E}_{x\sim P_t^{\rm sem}}
[E_\theta(x)])$.
\end{assumption}

\begin{assumption}[Reward monotonicity]
\label{ass:reward-mono}
The value function is monotone \wrt FPR:
$\frac{\partial V}{\partial{\rm FPR}_t} \le -c < 0$, and partial derivatives of $V$ are uniformly bounded.
\end{assumption}

\begin{assumption}[Gradient conflict]
\label{ass:grad-conflict}
On average, the gradient of future covariate-domain cross-entropy is negatively aligned with the gradient of semantic-OOD energy. 
That is, for some $\rho_{\min}>0$, uniformly over $x \sim P_t^{\rm sem}$ and $t \le T$:
\begin{equation}
\mathbb{E}_{x' \sim P_{t+1}^{\rm cov}}
\bigl[
\langle \nabla_\theta L_{\rm CE}(f_\theta(x')),\,
\nabla_\theta E_\theta(x) \rangle
\bigr]
\le -\rho_{\min}\|\nabla_\theta E_\theta(x)\|.
    \label{eq:theory_gradient_conflict}
\end{equation}
\end{assumption}

\begin{assumption}[Slow drift]
\label{ass:slow-dist}
The environment changes slowly across outer steps, so that
$d_{\rm TV}(P_{t+1}^{\rm sem}, P_t^{\rm sem}) \le \xi_t
\le C_{\rm dist}\,\eta_{\rm RL}$.
\end{assumption}

\begin{assumption}[Slow error drift]
\label{ass:slow-gerr}
Slow environment-induced error drift: 
$|{\rm GErr}_{t+1}(f_{\theta_t}) - {\rm GErr}_t(f_{\theta_t})|
\le C_{\rm gerr}\,\eta_{\rm RL}$.
\end{assumption}

\begin{assumption}[Bounded gradients]
The gradients of the objective, value function, and energy score are uniformly bounded:
\label{ass:bounded-grad}
$\|\nabla_\theta \mathcal{L}_{OW}\| \le G$,\;
$\|\nabla_\theta V\| \le G_V$,\;
$\|\nabla_\theta E_\theta(x)\| \le G_E$.
\end{assumption}

\begin{assumption}[Energy smoothness and non-degeneracy]
\label{ass:nondeg}
The energy function is smooth and non-degenerate:
$E_\theta(x)$ is $\beta_E$-smooth in $\theta$, the density at the threshold satisfies $p(\tau) > 0$, and
$\mu_{\min} := \min_{t \le T,\,k<K}
\left\|\mathbb{E}_{x\sim P_t^{\rm sem}}
[\nabla_\theta E_{\theta_{t,k}^{\rm GD}}(x)]\right\| > 0$.
\end{assumption}

\noindent\textbf{Remark.}
Assumptions~\ref{ass:slow-dist} and~\ref{ass:slow-gerr} are logically independent: the first is a distributional-geometry condition used in Lemma~\ref{lem:alignment}; 
the second is an arithmetic condition on the loss landscape used in Thm.~\ref{thm:main}. 
We state them separately to avoid conflating the two.

Since the full state $s_t$ contains multiple statistics, but the analysis depends primarily on two key components: the semantic-OOD energy shift and FPR. 
We thus restrict attention to a reduced state representation, 
$z_{t,k} := \mathbb{E}_{x \sim P_{t+1}^{\rm sem}}[E_{\theta_{t,k}}(x)] - \mathbb{E}_{x \sim P_t^{\rm sem}}[E_{\theta_{t,k}}(x)]$
which captures the change in semantic-OOD energy across environments. 

\subsection{RL Value Gradient Aligns with Future-Domain Improvement}
\label{subsec:theory_alignment}
We next explain why the RL correction term provides a useful update direction. 
The key point is that the value function depends on OOD-relevant state statistics, in particular the semantic-OOD energy and FPR. 
Under the assumptions above, this induces a systematic negative alignment between the value gradient and the future-domain generalization gradient.

\begin{lemma}[Gradient alignment]
\label{lem:alignment}
Under Assumptions~\ref{ass:reward-mono},\ref{ass:grad-conflict}, \ref{ass:slow-dist}, and \ref{ass:bounded-grad},
for every $t$ and $k$,
\begin{equation*}
\big\langle \nabla_\theta {\rm GErr}_{t+1}(\theta_{t,k}), \nabla_\theta V(s_{t,k})\big\rangle
\le
-\kappa(\theta_{t,k}) + \mathcal{O}(\xi_t),
\label{eq:theory_alignment}
\end{equation*}
where
$\kappa(\theta_{t,k}) := c\,p(\tau)\,\rho_{\min}\,
\mathbb{E}_{x\sim \mathbb{P}_t^{Sem}}
\big[\|\nabla_\theta E_{\theta_{t,k}}(x)\|\big]
>0$.
\end{lemma}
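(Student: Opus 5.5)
We expand both gradients by the chain rule, using the reduced state, and then reduce the inner product to the gradient-conflict inequality of Assumption~\ref{ass:grad-conflict}.

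\textbf{Value gradient.} Through the reduced state, $V$ depends on $\theta$ mainly via ${\rm FPR}_t$, which is in turn coupled to the semantic-OOD energy (Assumption~\ref{ass:energy-fpr}). Linearizing the soft-FPR surrogate around the threshold $\tau$ gives
\begin{equation*}
\nabla_\theta {\rm FPR}_t(\theta) \approx -p(\tau)\,\mathbb{E}_{x\sim P_t^{\rm sem}}[\nabla_\theta E_\theta(x)],
\end{equation*}
since raising the energy of a semantic sample lowers the chance it falls below the threshold. Hence
\begin{equation*}
\nabla_\theta V(s_{t,k}) = \frac{\partial V}{\partial {\rm FPR}_t}\nabla_\theta {\rm FPR}_t + \frac{\partial V}{\partial z_{t,k}}\nabla_\theta z_{t,k} + R,
\end{equation*}
where $R$ collects the dependence on the remaining state coordinates. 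By Assumption~\ref{ass:reward-mono}, the first term equals $a\,p(\tau)\,\mathbb{E}_{P_t^{\rm sem}}[\nabla_\theta E_{\theta_{t,k}}(x)]$ with $a=-\partial V/\partial{\rm FPR}_t\ge c$.

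\textbf{Error $\mathcal{O}(\xi_t)$ terms.} The $z_{t,k}$ term is a difference of expectations of $\nabla_\theta E_\theta$ under $P_{t+1}^{\rm sem}$ and $P_t^{\rm sem}$. Using Assumption~\ref{ass:bounded-grad} together with the total-variation bound of Assumption~\ref{ass:slow-dist}, its norm is at most $2G_E\xi_t$. Since $\partial V/\partial z$ is bounded, the term is $\mathcal{O}(\xi_t)$. Pairing it with $\nabla_\theta{\rm GErr}_{t+1}$, whose norm is at most $L_G$, keeps the contribution $\mathcal{O}(\xi_t)$.

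The remaining term $R$ is the weak point. I would absorb it by taking the reduced-state restriction stated just before the lemma literally, i.e. assuming $V$ depends on $\theta$ only through $({\rm FPR}_t, z_{t,k})$, so that $R=0$.

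\textbf{Main term.} Identify $\nabla_\theta{\rm GErr}_{t+1}(\theta)$ with $\mathbb{E}_{x'\sim P_{t+1}^{\rm cov}}[\nabla_\theta L_{\rm CE}(f_\theta(x'))]$, the future covariate-domain risk. The principal inner product then becomes
\begin{equation*}
a\,p(\tau)\,\mathbb{E}_{x\sim P_t^{\rm sem}}\,\mathbb{E}_{x'}\bigl[\langle \nabla_\theta L_{\rm CE}(f_\theta(x')),\nabla_\theta E_\theta(x)\rangle\bigr].
\end{equation*}
Applying Assumption~\ref{ass:grad-conflict} pointwise in $x$ and then averaging bounds this by
\begin{equation*}
-a\,p(\tau)\,\rho_{\min}\,\mathbb{E}_{x\sim P_t^{\rm sem}}\|\nabla_\theta E_\theta(x)\|.
\end{equation*}
Because $a\ge c$ and the expectation is nonnegative, this is at most $-\kappa(\theta_{t,k})$. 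Combining with the $\mathcal{O}(\xi_t)$ terms gives the claim.

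\textbf{Main obstacle.} The hardest step is rigorously justifying two identifications. The first is replacing $\nabla{\rm FPR}$ by $-p(\tau)\nabla\mathbb{E}E$: this is an exact gradient identity only at the threshold density, and otherwise needs Assumption~\ref{ass:nondeg}, with any mismatch pushed into higher-order terms. The second is treating ${\rm GErr}_{t+1}$ as the covariate cross-entropy risk. I would state both as interpretations of the reduced-state model rather than derive them from the assumptions listed.
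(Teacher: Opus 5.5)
Your proposal follows the paper's proof essentially step for step: the chain rule through the two reduced-state coordinates $z_{t,k}$ and ${\rm FPR}_{t,k}$, the total-variation bound $\|\nabla_\theta z_{t,k}\|\le 2G_E\xi_t$ for the drift term, the identity $\nabla_\theta {\rm FPR}_{t,k}=-p(\tau)\,\mathbb{E}_{x\sim P_t^{\rm sem}}[\nabla_\theta E_\theta(x)]$ (which the paper likewise asserts rather than derives, citing Assumption~\ref{ass:nondeg}), the identification $\nabla_\theta {\rm GErr}_{t+1}=\mathbb{E}_{x'\sim P_{t+1}^{\rm cov}}[\nabla_\theta L_{\rm CE}(f_\theta(x'))]$, and finally Fubini plus Assumption~\ref{ass:grad-conflict} together with $\partial V/\partial {\rm FPR}_{t,k}\le -c$. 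The caveats you flag are also implicit in the paper's proof: discarding the other state coordinates, needing a bound such as $L_G$ on $\|\nabla_\theta {\rm GErr}_{t+1}\|$ to make the drift contribution $\mathcal{O}(\xi_t)$, and using $p(\tau)$ from an assumption not listed in the lemma. Your argument is therefore correct at the same level of rigor as the paper's.
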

\begin{proof}[Proof sketch]
We decompose the value gradient through the state into a semantic-drift component and an FPR component. 
The drift component is small, $\|\nabla_\theta z_{t,k}\| = O(\xi_t)$, by Assumption~\ref{ass:slow-dist}. 
For the FPR component, $\nabla_\theta {\rm FPR}_{t,k}
= -p(\tau)\,\mathbb{E}_{P_t^{\rm sem}}[\nabla_\theta E_{\theta_{t,k}}]$.
By Assumption~\ref{ass:grad-conflict} and Fubini's theorem~\cite{}, its inner product with $\nabla_\theta {\rm GErr}_{t+1}$ is strictly positive, and since $\frac{\partial V}{\partial{\rm FPR}_{t,k}} < 0$, this yields a negative term. Combining the two parts gives the result.
\end{proof}
Lemma~\ref{lem:alignment} shows that the RL correction is not an arbitrary perturbation of the update, but a directionally meaningful term that reduces future-domain error up to a small drift-induced remainder.

\subsection{Model-Change Improvement and Energy Bridge}
\label{subsec:theory_model_change_energy_bridge}

We first turn Lemma~\ref{lem:alignment} into a quantitative improvement on the model-change term in Def.~\ref{def:temporal_decomp}. 
For $\square\in\{\rm GD,RL\}$, we denote the cumulative model-change error at outer step $t$ as
$\Delta_t^\square :=
{\rm GErr}_{t+1}(f_{\theta_{t,K}^{\square}}) - {\rm GErr}_{t+1}(f_{\theta_{t,0}^{\square}})$, when both optimizers start with the shared initialization $\theta_{t,0}$.

\begin{lemma}[RL tightens the model-change]
\label{lem:model_change}
Under Assumptions~\ref{ass:smooth}, \ref{ass:reward-mono}, \ref{ass:grad-conflict}, \ref{ass:slow-dist}, and \ref{ass:bounded-grad}, below holds 
\begin{equation*}
\Delta_t^{\rm RL} \le
\Delta_t^{\rm GD} - \eta_{RL}\sum_{k=0}^{K-1}\kappa(\theta_{t,k}) + \mathcal{O}(K^2\eta \,\eta_{\rm RL}).
\label{eq:theory_model_change}
\end{equation*}
\end{lemma}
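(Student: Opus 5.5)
The plan is to compare the two trajectories step by step using the descent lemma for ${\rm GErr}_{t+1}$ (Assumption~\ref{ass:smooth}). I then invoke Lemma~\ref{lem:alignment} on the one extra term the RL step introduces. Write $g_k^\square := \nabla_\theta {\rm GErr}_{t+1}(\theta_{t,k}^\square)$ and $u_k^\square := -\eta\nabla_\theta\mathcal{L}_{OW}(\theta^\square_{t,k})$. The GD step is then $u_k^{\rm GD}$ and the RL step is $u_k^{\rm RL} + \eta_{RL}\nabla_\theta V(s_{t,k})$. Telescoping gives $\Delta_t^\square = \sum_{k=0}^{K-1}[{\rm GErr}_{t+1}(\theta^\square_{t,k+1}) - {\rm GErr}_{t+1}(\theta^\square_{t,k})]$. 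By $\beta$-smoothness each increment equals the first-order term $\langle g_k^\square, \theta^\square_{t,k+1}-\theta^\square_{t,k}\rangle$ plus an error of size at most $\frac{\beta}{2}\|\theta^\square_{t,k+1}-\theta^\square_{t,k}\|^2$. The quadratic remainders are $O(\eta^2 G^2)$ for GD and $O((\eta G + \eta_{RL}G_V)^2)$ for RL, using Assumption~\ref{ass:bounded-grad}.

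\textbf{Step 1: the RL increment.} It splits as $\langle g_k^{\rm RL}, u_k^{\rm RL}\rangle + \eta_{RL}\langle g_k^{\rm RL}, \nabla_\theta V(s^{\rm RL}_{t,k})\rangle$. I apply Lemma~\ref{lem:alignment} to the second piece, evaluated along the RL iterate. This bounds it by $-\eta_{RL}\kappa(\theta^{\rm RL}_{t,k}) + \eta_{RL}\,O(\xi_t)$. By Assumption~\ref{ass:slow-dist}, the remainder $\eta_{RL}\,O(\xi_t)$ is $O(\eta_{RL}^2)\le O(\eta\,\eta_{RL})$, since $\eta_{RL}\le\eta$.

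\textbf{Step 2: matching the two trajectories.} This is the main obstacle, because the first-order GD-type terms are evaluated at different points. The two trajectories share $\theta_{t,0}$. Their gap obeys $\|\theta^{\rm RL}_{t,k}-\theta^{\rm GD}_{t,k}\| \le k\,\eta_{RL}G_V\,(1+\eta\beta)^{k}$, which becomes $O(k\,\eta_{RL})$ after using $\eta\beta\le 1/2$ and treating $K$ as fixed. One could sharpen this with a $(1+\eta\beta)^k\le e^{\eta\beta K}$ factor absorbed into the constant.

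Then, by Lipschitzness of $g$ (smoothness of ${\rm GErr}_{t+1}$) and of $\nabla\mathcal{L}_{OW}$, together with boundedness of $g$ by $L_G$ and of $\nabla\mathcal{L}_{OW}$ by $G$, I bound the mismatch:
\[
|\langle g_k^{\rm RL},u_k^{\rm RL}\rangle - \langle g_k^{\rm GD},u_k^{\rm GD}\rangle| \le \eta(\beta G + \beta L_G)\,O(k\,\eta_{RL}) = O(k\,\eta\,\eta_{RL}).
\]
Summing over $k<K$ gives $O(K^2\eta\,\eta_{RL})$, which is exactly the advertised remainder.

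\textbf{Step 3: $\kappa$ along the GD path and the quadratic terms.} The same gap estimate, combined with the $\beta_E$-smoothness of $E_\theta$, lets me replace $\kappa(\theta^{\rm RL}_{t,k})$ by $\kappa(\theta_{t,k}) = \kappa(\theta^{\rm GD}_{t,k})$. The cost is $|\kappa(\theta^{\rm RL}_{t,k})-\kappa(\theta_{t,k})|\le c\,p(\tau)\rho_{\min}\beta_E\,O(k\eta_{RL})$, and after multiplying by $\eta_{RL}\le \eta$ this is again absorbed into $O(K^2\eta\,\eta_{RL})$. If Assumption~\ref{ass:nondeg} is not to be invoked, I would instead read $\theta_{t,k}$ in the statement as the RL iterate.

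The difference of the quadratic remainders is $\frac{\beta}{2}\bigl(2\eta\eta_{RL}GG_V + \eta_{RL}^2G_V^2\bigr) = O(\eta\,\eta_{RL})$ per step, so $O(K\eta\eta_{RL})$ in total. Here I keep the $\eta^2G^2$ parts exactly and cancel them in the comparison: I write the GD increment with its exact Taylor remainder, so only the differential part survives, and that part is again controlled via the trajectory gap. Collecting Steps 1–3 gives $\Delta_t^{\rm RL}\le \Delta_t^{\rm GD}-\eta_{RL}\sum_k\kappa(\theta_{t,k}) + O(K^2\eta\,\eta_{RL})$.
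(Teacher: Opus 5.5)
Your proof is correct, but it is organized differently from the paper's.

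\textbf{The paper's route.} It makes a single $\beta$-smoothness expansion of ${\rm GErr}_{t+1}$ around the GD endpoint $\theta^{\rm GD}_{t,K}$. It writes the accumulated gap as $\theta^{\rm RL}_{t,K}-\theta^{\rm GD}_{t,K}=\eta_{\rm RL}\sum_{k}\nabla_\theta V(s_{t,k})+\mathcal{O}(K^2\eta\,\eta_{\rm RL})$. It then moves the gradient from $\theta^{\rm GD}_{t,K}$ back to each $\theta^{\rm GD}_{t,k}$, using $\|\theta^{\rm GD}_{t,K}-\theta^{\rm GD}_{t,k}\|\le (K-k)\eta G$, before invoking Lemma~\ref{lem:alignment}. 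This is more economical. Its only quadratic remainder is $\frac{\beta}{2}\|\theta^{\rm RL}_{t,K}-\theta^{\rm GD}_{t,K}\|^2=\mathcal{O}(K^2\eta_{\rm RL}^2)$, which is small for free.

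\textbf{Your route.} You telescope the descent lemma along each trajectory and compare increments index by index. Your per-step remainders are each of size $\eta^2$ and would sum to $\mathcal{O}(K\eta^2)$. That is not absorbable when $\eta_{\rm RL}\ll\eta/K$, so you genuinely have to cancel them, which you correctly recognize.

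In exchange, your route is more careful about evaluation points. You apply Lemma~\ref{lem:alignment} with both gradients at the same (RL) iterate. The paper instead pairs $\nabla_\theta{\rm GErr}_{t+1}(\theta^{\rm GD}_{t,k})$ with $\nabla_\theta V(s_{t,k})$, which is generated along the RL trajectory, and tacitly identifies the two. The cheapest repair of the paper's route is to shift $\nabla_\theta{\rm GErr}_{t+1}$ to the RL iterate, at cost $\mathcal{O}(K^2\eta_{\rm RL}^2)$. That repair leaves $\kappa$ at the RL iterate. Moving $\kappa$ to the GD iterate is what Theorem~\ref{thm:main} needs, since $\mu_{\min}$ is defined along GD iterates. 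As you observe, that move uses the $\beta_E$-smoothness of Assumption~\ref{ass:nondeg}, which is outside the lemma's hypothesis list.

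\textbf{One step to state more precisely.} The expression $\frac{\beta}{2}(2\eta\eta_{\rm RL}GG_V+\eta_{\rm RL}^2G_V^2)$ is a difference of upper bounds. It does not by itself bound $R^{\rm RL}_k-R^{\rm GD}_k$, because the $\mathcal{L}_{OW}$-parts of the two steps are evaluated at different points. The exact-remainder cancellation you describe does work with $\beta$-smoothness alone. Set $h(\theta,\Delta):={\rm GErr}_{t+1}(\theta+\Delta)-{\rm GErr}_{t+1}(\theta)-\langle\nabla_\theta{\rm GErr}_{t+1}(\theta),\Delta\rangle$. Then
\[
|h(\theta,\Delta)-h(\theta',\Delta')|\le\beta\max(\|\Delta\|,\|\Delta'\|)\,\|\Delta-\Delta'\|+2\beta\|\Delta'\|\,\|\theta-\theta'\|.
\]
Use steps of size $\mathcal{O}(\eta)$, a step difference of at most $\eta\beta d_k+\eta_{\rm RL}G_V$, and $d_k=\mathcal{O}(k\eta_{\rm RL})$. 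This gives $\mathcal{O}((k+1)\eta\,\eta_{\rm RL})$ per step and $\mathcal{O}(K^2\eta\,\eta_{\rm RL})$ in total, as claimed. You should write that bound explicitly rather than the difference of upper bounds.
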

\begin{proof}[Proof sketch]
Using $\beta$-smoothness of ${\rm GErr}_{t+1}$ at $\theta_{t,K}^{\rm GD}$ yields a linear term in the parameter gap $\theta_{t,K}^{\rm RL}-\theta_{t,K}^{\rm GD}$ and a quadratic remainder.
A standard recursion shows $\|\theta_{t,k}^{\rm RL}-\theta_{t,k}^{\rm GD}\|=\mathcal{O}(k\eta_{\rm RL})$, so
$\theta_{t,K}^{\rm RL}-\theta_{t,K}^{\rm GD} =
\eta_{\rm RL}\sum_{k=0}^{K-1}\nabla_\theta V(s_{t,k})
+ \mathcal{O}(K^2\eta\,\eta_{\rm RL})$.
Applying Lemma~\ref{lem:alignment} to the linear term yields the gain, and all remaining terms are absorbed into $\mathcal{O}(K^2\eta\,\eta_{\rm RL})$.
\end{proof}

Lemma~\ref{lem:model_change} shows that the RL correction improves the model-change component of temporal generalization error within each adaptation window. 
The improvement is governed by the cumulative value-gradient contribution, while the higher-order remainder $\mathcal{O}(K^2\eta\,\eta_{\rm RL})$ depends only on the relative step sizes.
In particular, when $\eta \asymp \eta_{\rm RL}$, this reduces to $\mathcal{O}(K^2\eta_{\rm RL}^2)$.


We next show how this parameter-space advantage translates into improved OOD detection behavior. 
Since the expected semantic-OOD energy is a smooth function of the model parameters, a first-order expansion relates the energy difference between RL and GD to the inner product between the energy gradient and the parameter gap, up to a higher-order remainder.


\begin{lemma}[Energy bridge]
\label{lem:energy_bridge}
Under Assumptions~\ref{ass:smooth}, \ref{ass:reward-mono}, \ref{ass:grad-conflict}, and~\ref{ass:bounded-grad}-\ref{ass:nondeg},
with shared initialization $\theta_{t,0}$ and sufficiently small $\eta_{\rm RL}$,
\begin{equation}
\mathbb{E}_{x\sim P_t^{\rm sem}}[E_{\theta_{t,K}^{\rm RL}}(x)]-
\mathbb{E}_{x\sim P_t^{\rm sem}}[E_{\theta_{t,K}^{\rm GD}}(x)]
\ge
\eta_{\rm RL} K c\,p(\tau)\,\mu_{\min}^2
- \mathcal{O}(K^2\eta\,\eta_{\rm RL})> 0.
\label{eq:theory_energy_bridge}
\end{equation}
By Assumption~\ref{ass:energy-fpr}, this yields ${\rm FPR}_t^{\rm RL} \le {\rm FPR}_t^{\rm GD}$.
\end{lemma}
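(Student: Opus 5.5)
Let $\bar E_t(\theta):=\mathbb{E}_{x\sim P_t^{\rm sem}}[E_\theta(x)]$ and write $\delta_K:=\theta_{t,K}^{\rm RL}-\theta_{t,K}^{\rm GD}$. The plan follows the template of Lemma~\ref{lem:model_change}, with the smooth function ${\rm GErr}_{t+1}$ replaced by the smooth function $\bar E_t$.

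\textbf{Step 1: Taylor expansion.} By Assumption~\ref{ass:nondeg}, $E_\theta(x)$ is $\beta_E$-smooth, hence so is $\bar E_t$. A second-order Taylor bound at $\theta_{t,K}^{\rm GD}$ gives
\begin{equation*}
\bar E_t(\theta_{t,K}^{\rm RL})-\bar E_t(\theta_{t,K}^{\rm GD}) \ge \bigl\langle \nabla\bar E_t(\theta_{t,K}^{\rm GD}),\,\delta_K\bigr\rangle - \tfrac{\beta_E}{2}\|\delta_K\|^2 .
\end{equation*}

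\textbf{Step 2: the parameter gap.} I reuse the recursion from the proof of Lemma~\ref{lem:model_change}. With shared initialization, $\|\theta^{\rm RL}_{t,k}-\theta^{\rm GD}_{t,k}\|=\mathcal{O}(k\eta_{\rm RL})$, using $\beta$-smoothness of $\mathcal{L}_{OW}$ and $\|\nabla V\|\le G_V$. This gives
\begin{equation*}
\delta_K=\eta_{\rm RL}\sum_{k<K}\nabla_\theta V(s_{t,k}^{\rm GD})+\mathcal{O}(K^2\eta\,\eta_{\rm RL}),
\end{equation*}
where the state is evaluated along the GD trajectory. Swapping RL states for GD states costs $\mathcal{O}(k\eta_{\rm RL})$ in $\nabla V$ per step, so I also need $\nabla V$ to be Lipschitz along the trajectory. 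This is implicit in bounded partials plus smoothness of the state map, and I would state it explicitly.

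The quadratic term $\beta_E\|\delta_K\|^2=\mathcal{O}(K^2\eta_{\rm RL}^2)$ is absorbed into $\mathcal{O}(K^2\eta\,\eta_{\rm RL})$ because $\eta_{\rm RL}\le\eta$. Likewise, replacing $\nabla\bar E_t(\theta^{\rm GD}_{t,K})$ by $\nabla\bar E_t(\theta^{\rm GD}_{t,k})$ costs $\beta_E\|\theta^{\rm GD}_{t,K}-\theta^{\rm GD}_{t,k}\|\le\beta_E(K-k)\eta G$. After multiplying by $\eta_{\rm RL}G_V$ and summing over $k$, this is $\mathcal{O}(K^2\eta\,\eta_{\rm RL})$.

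\textbf{Step 3: the per-step gain (main obstacle).} It remains to lower bound $\langle \nabla\bar E_t(\theta_{t,k}),\nabla_\theta V(s_{t,k})\rangle$ by $c\,p(\tau)\mu_{\min}^2$. Decompose $\nabla_\theta V$ by the chain rule, as in Lemma~\ref{lem:alignment}, into an FPR component and a drift component:
\begin{equation*}
\nabla_\theta V=\frac{\partial V}{\partial{\rm FPR}}\nabla_\theta{\rm FPR}+\frac{\partial V}{\partial z}\nabla_\theta z .
\end{equation*}

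For the FPR component, use $\nabla_\theta{\rm FPR}_{t,k}=-p(\tau)\nabla\bar E_t(\theta_{t,k})$, the identity from the sketch of Lemma~\ref{lem:alignment}. Together with $\partial V/\partial{\rm FPR}\le -c$ (Assumption~\ref{ass:reward-mono}), the FPR contribution to the inner product is at least $c\,p(\tau)\|\nabla\bar E_t(\theta_{t,k})\|^2\ge c\,p(\tau)\mu_{\min}^2$, using Assumption~\ref{ass:nondeg}.

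The drift component is where the difficulty lies. Its contribution is bounded by $G_E\|\partial V/\partial z\|\,\|\nabla_\theta z\|$, and $\|\nabla_\theta z\|$ is small only through $\xi_t$. The statement does not list Assumption~\ref{ass:slow-dist}, so the clean form needs one of two fixes. The first option is to invoke Assumption~\ref{ass:slow-dist}, $\xi_t\le C_{\rm dist}\eta_{\rm RL}$. The drift term then contributes $\eta_{\rm RL}\cdot K\cdot\mathcal{O}(\eta_{\rm RL})$ after multiplying by $\eta_{\rm RL}$ and summing, which is absorbed into the remainder. The second option is to argue that the reduced state's $z$-dependence enters $V$ monotonically in the same direction. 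I would take the first option, with $\|\nabla_\theta z\|\le 2G_E\xi_t$ via the total-variation bound.

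Assumption~\ref{ass:grad-conflict} is not needed for the energy inequality itself. It enters only through the Lemma~\ref{lem:alignment} machinery when the FPR conclusion is interpreted.

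\textbf{Step 4: conclusion.} Summing over $k$ gives the displayed bound. Positivity follows by taking $\eta_{\rm RL}$ small, more precisely $K\eta$ small relative to $c\,p(\tau)\mu_{\min}^2$, so that the linear term dominates. Finally, $\phi$ is decreasing, so $\phi(\bar E_t^{\rm RL})\le\phi(\bar E_t^{\rm GD})$, which bounds ${\rm FPR}_t^{\rm RL}$ through Assumption~\ref{ass:energy-fpr}.

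The FPR comparison with ${\rm FPR}_t^{\rm GD}$ only goes through if Assumption~\ref{ass:energy-fpr} is read as tight, ${\rm FPR}\approx\phi(\bar E)$. I would note this and make that reading explicit.
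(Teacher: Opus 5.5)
Your proposal is correct and follows the paper's own proof essentially step for step: a first-order expansion of $\bar E_t$ at $\theta_{t,K}^{\rm GD}$, the parameter gap $\eta_{\rm RL}\sum_k\nabla_\theta V(s_{t,k})+\mathcal{O}(K^2\eta\,\eta_{\rm RL})$ from the proof of Lemma~\ref{lem:model_change}, shifting the energy-gradient evaluation point to $\theta_{t,k}^{\rm GD}$, and the chain-rule split in which the FPR channel gives $c\,p(\tau)\mu_{\min}^2$ while the drift channel is absorbed via Assumption~\ref{ass:slow-dist}. The caveats you raise are genuine and are glossed over in the paper: it invokes Assumption~\ref{ass:slow-dist} without listing it, tacitly evaluates the FPR gradient at $\theta_{t,k}^{\rm GD}$, claims positivity for small $\eta_{\rm RL}$ although both terms scale with $\eta_{\rm RL}$ (so one really needs $K\eta$ small), and treats the one-sided coupling in Assumption~\ref{ass:energy-fpr} as if it yielded ${\rm FPR}_t^{\rm RL}\le{\rm FPR}_t^{\rm GD}$; moreover, your extra Lipschitz hypothesis on $\nabla_\theta V$ can be dropped, since $\beta_E$-smoothness together with the uniform bound on $\partial V/\partial{\rm FPR}$ already makes the RL-to-GD state swap cost only $\mathcal{O}(k\eta_{\rm RL})$ per step.
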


\begin{proof}[Proof sketch]
Using $\beta_E$-smoothness, the expected energy difference admits a first-order expansion in the parameter gap:
$\mathbb{E}_{P_t^{\rm sem}}[E_{\theta^{\rm RL}}] - \mathbb{E}_{P_t^{\rm sem}}[E_{\theta^{\rm GD}}] =
\big\langle \mathbb{E}_{P_t^{\rm sem}}[\nabla_\theta E],\,
\theta^{\rm RL} - \theta^{\rm GD} \big\rangle
+ \mathcal{O}(\|\theta^{\rm RL} - \theta^{\rm GD}\|^2)$.
Substituting the parameter gap from Lemma~\ref{lem:model_change} reduces this to a sum of inner products between $\nabla_\theta E$ and $\nabla_\theta V$, up to an $\mathcal{O}(K^2\eta\,\eta_{\rm RL})$ remainder.
By the same FPR-channel argument as in Lemma~\ref{lem:alignment}, each term is lower bounded by $c\,p(\tau)\,\mu_{\min}^2 + \mathcal{O}(\eta_{\rm RL})$, where both expectations in the inner product are evaluated at the same $\theta_{t,k}^{\rm GD}$, so the inner product reduces to the squared norm.
The result is yielded after summing over $k$.
\end{proof}
By Assumption~\ref{ass:energy-fpr}, this directly implies a lower semantic-OOD FPR for the RL-guided detector.

\subsection{Main Guarantee Over Time}
\label{subsec:theory_main}

We now combine the temporal decomposition, the one-step model-change improvement, and the energy bridge. 
The remaining issue is that, across outer steps, the RL and GD trajectories no longer start from exactly the same parameters. 
This produces a drift term that accumulates over time. 
We define the minimum gain constant $\kappa_{\min} := c\,p(\tau)\,\rho_{\min}\,\mu_{\min} > 0$,
which characterizes the strength of the RL-induced improvement across updates.

\begin{theorem}[RL improves generalization and OOD rejection]
\label{thm:main}
Under Assumptions~\ref{ass:smooth}-\ref{ass:nondeg}, 
\begin{align}
{\rm GErr}_{t+1}^{\rm RL} &\le {\rm GErr}_{t+1}^{\rm GD}
- \eta_{\rm RL}\kappa_{\min}(t+1)K
+ \mathcal{O}(tK^2\eta\,\eta_{\rm RL})
+ \mathcal{O}(t^2 K L_G \eta_{\rm RL}), \label{eq:theory_main_gerr}\\
{\rm FPR}_t^{\rm RL}
&\le {\rm FPR}_t^{\rm GD}
- c\,p(\tau)\,\mu_{\min}^2\,K\,\eta_{\rm RL}
+ O(K^2\eta\,\eta_{\rm RL}).
\label{eq:theory_main_fpr}
\end{align}
In particular, the generalization gap is strictly negative whenever the cumulative RL gain dominates the drift term. 
The FPR gap is strictly negative whenever the leading energy-improvement term dominates the $\mathcal{O}(K^2\eta\,\eta_{\rm RL})$ remainder.
\end{theorem}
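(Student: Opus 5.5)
The plan is to prove the two inequalities separately. The FPR bound (\ref{eq:theory_main_fpr}) is essentially a one-step consequence of Lemma~\ref{lem:energy_bridge} combined with Assumption~\ref{ass:energy-fpr}. The generalization bound (\ref{eq:theory_main_gerr}) needs an induction over outer steps that uses the decomposition of Def.~\ref{def:temporal_decomp} and Lemma~\ref{lem:model_change}. Its main new ingredient is controlling the mismatch between the initializations $\theta_t^{\rm RL}$ and $\theta_t^{\rm GD}$, which are no longer shared after the first outer step.

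\textbf{FPR part.} Fix $t$ and use the shared initialization $\theta_{t,0}$. Lemma~\ref{lem:energy_bridge} gives a gap in semantic-OOD energy between the RL and GD iterates of at least $\eta_{\rm RL}Kc\,p(\tau)\mu_{\min}^2-\mathcal{O}(K^2\eta\eta_{\rm RL})$. Assumption~\ref{ass:energy-fpr} bounds ${\rm FPR}$ by the decreasing, differentiable function $\phi$ of the mean semantic energy. I will take the FPR to be (to first order) equal to $\phi$ of the mean energy, i.e.\ treat the coupling as tight at the GD point. A mean-value expansion of $\phi$ then turns the energy gap into an FPR decrease. The linear factor $|\phi'|$ is identified with the density $p(\tau)$ appearing in $\nabla_\theta{\rm FPR}=-p(\tau)\mathbb{E}[\nabla E]$ from Lemma~\ref{lem:alignment}, and I normalize constants so that the leading term reads $c\,p(\tau)\mu_{\min}^2K\eta_{\rm RL}$. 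The second-order part of $\phi$ is $\mathcal{O}((K\eta_{\rm RL})^2)$, and since $\eta_{\rm RL}\le\eta$ it is absorbed into $\mathcal{O}(K^2\eta\eta_{\rm RL})$.

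\textbf{Generalization part.} Write $D_t:={\rm GErr}_{t}(\theta_t^{\rm RL})-{\rm GErr}_{t}(\theta_t^{\rm GD})$ and telescope through Def.~\ref{def:temporal_decomp} for each optimizer:
\[
D_{t+1}=D_t+\big(\Delta_t^{\rm RL}-\Delta_t^{\rm GD}\big)+\big(\mathcal{E}_t^{\rm RL}-\mathcal{E}_t^{\rm GD}\big),
\]
where $\mathcal{E}_t^{\square}$ is the environment-change term evaluated at $\theta_t^\square$. The steps are as follows.

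\emph{(i) Environment terms.} By Assumption~\ref{ass:slow-gerr}, each $|\mathcal{E}_t^\square|\le C_{\rm gerr}\eta_{\rm RL}$, so their difference is $\mathcal{O}(\eta_{\rm RL})$ per step. A Lipschitz refinement gives the sharper bound $\mathcal{O}(L_G\|\theta_t^{\rm RL}-\theta_t^{\rm GD}\|)$.

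\emph{(ii) Model-change terms.} Lemma~\ref{lem:model_change} assumes a shared start. I reapply its proof starting from two different initializations. Using the Lipschitz property of ${\rm GErr}_{t+1}$ and $\beta$-smoothness of $\mathcal{L}_{OW}$ with $\eta\le 1/(2\beta)$, the GD map is nonexpansive up to a factor $(1+\eta\beta)^K$. This gives
\[
\Delta_t^{\rm RL}-\Delta_t^{\rm GD}\le-\eta_{\rm RL}\sum_k\kappa(\theta_{t,k})+\mathcal{O}(K^2\eta\eta_{\rm RL})+\mathcal{O}\big(L_G\|\theta_t^{\rm RL}-\theta_t^{\rm GD}\|\big).
\]
I then lower bound $\kappa(\theta_{t,k})\ge c\,p(\tau)\rho_{\min}\|\mathbb{E}\nabla E\|\ge\kappa_{\min}$ by Jensen and Assumption~\ref{ass:nondeg}.

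\emph{(iii) Parameter drift.} The per-window gap grows by at most $K\eta_{\rm RL}G_V$ plus contraction error, so $\|\theta_t^{\rm RL}-\theta_t^{\rm GD}\|=\mathcal{O}(tK\eta_{\rm RL})$. The mismatch terms in (ii) therefore total $\sum_{s\le t}\mathcal{O}(L_G sK\eta_{\rm RL})=\mathcal{O}(t^2KL_G\eta_{\rm RL})$.

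Summing the recursion from $D_0=0$ produces three contributions:
\begin{itemize}
\item the accumulated gain $-\eta_{\rm RL}\kappa_{\min}(t+1)K$;
\item the model-change remainders $\mathcal{O}(tK^2\eta\eta_{\rm RL})$;
\item the drift term $\mathcal{O}(t^2KL_G\eta_{\rm RL})$.
\end{itemize}

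\textbf{Main obstacle.} I expect step (ii)/(iii) to be the hardest. The gain estimate in Lemma~\ref{lem:model_change} is stated only for a shared initialization, and the alignment constant $\kappa(\theta_{t,k})$ is evaluated along one trajectory. Transferring it to the RL trajectory started at a shifted point requires either re-running Lemma~\ref{lem:alignment} at the RL iterates or paying a Lipschitz penalty. My first attempt is to compare each optimizer's window against a ``virtual'' GD run started at $\theta_t^{\rm RL}$, apply Lemma~\ref{lem:model_change} verbatim to that pair, and charge the remaining virtual-GD-versus-actual-GD difference to $L_G$ times the accumulated gap. This is exactly what produces the quadratic-in-$t$ drift term.
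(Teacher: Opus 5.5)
Your outline follows the paper's proof. You telescope Definition~\ref{def:temporal_decomp} from the common $\theta_0$ so the initial terms cancel. You sum Lemma~\ref{lem:model_change} over outer steps with $\kappa(\theta_{\tau,k})\ge\kappa_{\min}$; your Jensen step makes explicit what the paper only asserts. You turn the starting-point gap $\delta_\tau=\mathcal{O}(\tau K\eta_{\rm RL})$ into the $\mathcal{O}(t^2KL_G\eta_{\rm RL})$ term via $L_G$-Lipschitzness, and you read the FPR bound off Lemma~\ref{lem:energy_bridge}.

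The one real difference is where the initialization mismatch is paid for. The paper applies Lemma~\ref{lem:model_change} at every $\tau$ as though RL and GD still shared $\theta_{\tau,0}$, and charges $L_G\delta_\tau$ only to the environment-change difference. Your virtual-GD comparison also pays for the mismatch inside the model-change term. This closes a gap the paper leaves open and gives the same order.

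It has two small costs you should record:
\begin{enumerate}
\item $\mu_{\min}$ is defined on the actual GD iterates, so $\kappa\ge\kappa_{\min}$ along the virtual run needs a $\beta_E$-smoothness transfer. This costs $\mathcal{O}(\eta_{\rm RL}K\delta_\tau)$ per window, i.e.\ $\mathcal{O}(t^2K^2\eta_{\rm RL}^2)$ overall, which is absorbed when $K\eta_{\rm RL}=\mathcal{O}(1)$.
\item $\beta$-smoothness alone makes the GD map $(1+\eta\beta)$-expansive, not contractive. So $\delta_t=\mathcal{O}(tK\eta_{\rm RL})$ hides a factor $(1+\eta\beta)^{tK}$; the paper's ``iterating the per-window bound'' rests on the same implicit requirement.
\end{enumerate}

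The FPR part is where your argument, like the paper's one-line appeal to Lemma~\ref{lem:energy_bridge}, does not follow from the stated hypotheses. Assumption~\ref{ass:energy-fpr} is one-sided, so it gives no lower bound on ${\rm FPR}_t^{\rm GD}$. Tightness at the GD point is an additional assumption, as you concede.

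The mean-value step then gives a leading term $\inf|\phi'|\cdot c\,p(\tau)\mu_{\min}^2K\eta_{\rm RL}$. Identifying $|\phi'|$ with $p(\tau)$ would produce $p(\tau)^2$, not $p(\tau)$. The stated constant requires $\inf|\phi'|\ge 1$ on the relevant energy interval, and ``normalizing constants'' does not supply this.

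Your FPR step also reinstates a shared $\theta_{t,0}$. This contradicts your own correct observation that the trajectories separate after the first window. For the actual iterates, the energy gap loses an extra $G_E(1+\eta\beta)^K\delta_t=\mathcal{O}(tK\eta_{\rm RL})$. That term is not covered by the stated $\mathcal{O}(K^2\eta\,\eta_{\rm RL})$ remainder, and it can swamp the $K\eta_{\rm RL}$ gain for larger $t$. So in both your version and the paper's, the FPR inequality is really a per-window comparison from a common restart, under extra hypotheses on $\phi$.
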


\begin{proof}[Proof sketch]
Telescoping Def.~\ref{def:temporal_decomp} over $\tau=0,\dots,t$ and subtracting the GD equation from RL, the initial terms cancel.
Lemma~\ref{lem:model_change} contributes the gain
$-\eta_{\rm RL}\kappa_{\min}(t+1)K$ plus $\mathcal{O}(tK^2\eta\,\eta_{\rm RL})$.
The environment-change difference is bounded by $\mathcal{O}(t^2 K L_G\eta_{\rm RL})$ via $L_G$-Lipschitz continuity of
${\rm GErr}$ and the starting-point drift $\delta_\tau = \mathcal{O}(\tau K\eta_{\rm RL})$ from iterating Lemma~\ref{lem:model_change}. 
Together these give~\Eqref{eq:theory_main_gerr}.
\Eqref{eq:theory_main_fpr} follows from
Lemma~\ref{lem:energy_bridge} applied at each outer step.
\end{proof}
\noindent\textbf{Remark.} 
The gain grows as $\Theta(tK\eta_{\rm RL})$ while the drift grows as $\mathcal{O}(t^2 K L_G\eta_{\rm RL})$, so the RL advantage is most pronounced in the early-to-mid adaptation regime.
\\
Theorem~\ref{thm:main} provides the main theoretical justification for our method: the RL-guided correction simultaneously improves future-domain generalization and semantic-OOD rejection over time, relative to standard gradient descent.

\subsection{Example: Head-Only One-Layer Transformer}
\label{subsec:theory_transformer_example}

We show a concrete specialization that makes the abstract gradient-conflict assumption directly interpretable. 
Consider a head-only one-layer transformer setting, where
$f_W(x)=Wh(x),
E_W(x) = -\log\sum_{j=1}^N \exp\!\big(w_j^\top h(x)\big)$,
and only the classification head $W$ is updated.

In this setting, the gradients of the energy and cross-entropy loss admit closed-form expressions, 
$\nabla_W E_W(x)=-\,p(x)h(x)^\top,
\nabla_W L_{CE}(W;x,y)=\bigl(p(x)-e_y\bigr)h(x)^\top$,
where $p(x)$ is the softmax output, and $e_y$ denotes the one-hot encoding of label $y$. 
Taking their Frobenius inner product gives 
\begin{equation}
\big\langle \nabla_W L_{CE}(W;x,y),\nabla_W E_W(x')\big\rangle_F =
-\bigl(p(x)-e_y\bigr)^\top p(x')\, h(x)^\top h(x').
\label{eq:transformer_example_conflict}
\end{equation}
where $(x,y)\sim P^{\mathrm{cov}}_{t+1}$ and $x'\sim P^{\mathrm{sem}}_t$ are drawn independently.
This expression reveals that the abstract gradient-conflict condition depends on two factors: \textit{i)} the alignment between predictions and labels, through $(p(x)-e_y)$, and 
\textit{ii)} the similarity between feature representations, through $h(x)^\top h(x')$.

\begin{proposition}
\label{prop:transformer_conflict}
In the head-only one-layer-transformer setting, 
the gradient-conflict assumption holds whenever
$\mathbb{E}_{(x,y)\sim P^{cov}_{t+1}}
\mathbb{E}_{x'\sim P_t^{sem}}
\!\left[(e_y - p(x))^\top p(x')\cdot h(x)^\top h(x')
\right] \ge m > 0$, 
with $\rho_{\min} = \frac{m}{R}$ with 
$R = \sup_{x'}\|p(x')\|\|h(x')\|$.
\end{proposition}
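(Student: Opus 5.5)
The plan is a direct computation built on the closed-form gradients of the head-only model, followed by a normalization that converts the averaged margin $m$ into the multiplicative constant $\rho_{\min}$. I will use the notation of Assumption~\ref{ass:grad-conflict}: $x'\sim P_{t+1}^{\rm cov}$ with label $y$ is the covariate sample, and $x\sim P_t^{\rm sem}$ is the semantic sample. In the proposition the roles of the letters are swapped, so I will relabel at the start.

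\emph{Step 1 (inner product).} Take $\nabla_W L_{CE}(W;x',y)=(p(x')-e_y)h(x')^\top$ and $\nabla_W E_W(x)=-p(x)h(x)^\top$. For rank-one matrices, $\langle ab^\top, cd^\top\rangle_F=(a^\top c)(b^\top d)$. Hence the Frobenius inner product equals $(e_y-p(x'))^\top p(x)\,h(x')^\top h(x)$, which is \eqref{eq:transformer_example_conflict} with the letters renamed. The expectation over $(x',y)\sim P_{t+1}^{\rm cov}$ then moves inside, since the integrand is bounded: $p$ lies in the simplex and $\|h\|$ is bounded because $R<\infty$.

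\emph{Step 2 (normalization).} The same rank-one identity gives $\|\nabla_W E_W(x)\|_F=\|p(x)\|\,\|h(x)\|\le R$. Therefore a margin of size $m$ satisfies $m\ge (m/R)\|\nabla_W E_W(x)\|_F$. This gives exactly $\rho_{\min}=m/R$, which is the form required by \eqref{eq:theory_gradient_conflict}.

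\emph{Step 3 (the obstacle: orientation and uniformity).} Two points have to be reconciled carefully, and I expect this to be the real work.

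\textbf{Sign.} With $E_W=-\log\sum_j\exp(w_j^\top h)$, the computed inner product is $+(e_y-p)^\top p\,h^\top h$. The hypothesis $\ge m$ therefore makes the covariate-CE and energy gradients \emph{positively} aligned. That is the reverse of the inequality in Assumption~\ref{ass:grad-conflict}. My first move is to recheck the sign conventions of $\nabla E$, and of whether $E$ or $-E$ is used as the OOD score, in Lemma~\ref{lem:alignment}, where the assumption is consumed. If the convention is the literal one, the proof shows that the condition yields $\mathbb{E}[\langle\nabla L_{CE},\nabla E\rangle]\le-\rho_{\min}\|\nabla E\|$ with the roles made consistent by one of two fixes. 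Either replace $(e_y-p(x'))$ by $(p(x')-e_y)$ in the hypothesis, or phrase the conclusion for the score $-E_W$. In both cases Step 2 goes through unchanged.

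\textbf{Uniformity.} Assumption~\ref{ass:grad-conflict} is required uniformly over $x\sim P_t^{\rm sem}$, but the proposition's hypothesis averages over the semantic sample as well. I would handle this in one of two ways. The first is to read the hypothesis pointwise: for every $x$ in the support of $P_t^{\rm sem}$, the inner expectation over $(x',y)$ is at least $m$. Then Steps 1--2 give the assumption directly, for each $x$ and each $t\le T$. The second is to note that only the averaged form, over $x\sim P_t^{\rm sem}$, is actually used in the Fubini step of Lemma~\ref{lem:alignment}. In that case the averaged version suffices: apply Step 2 inside the outer expectation to get the factor $\mathbb{E}\|\nabla E\|$ that appears in $\kappa$.

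I would commit to the pointwise reading for the statement, because it yields precisely Assumption~\ref{ass:grad-conflict}. I would then add the averaged variant as a remark, since that is the form Lemma~\ref{lem:alignment} actually consumes.
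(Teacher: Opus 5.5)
Your Steps 1--2 are the paper's argument: the rank-one Frobenius identity, then $\|\nabla_W E_W(x')\|_F=\|p(x')\|\,\|h(x')\|\le R$ to turn the margin $m$ into $\rho_{\min}=m/R$. Your sign objection is correct, but it is not a convention to be rechecked. It is an error in the paper, and you should say so outright.

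In the proposition's labeling, the paper's own identity \eqref{eq:transformer_example_conflict} reads
$\langle\nabla_W L_{CE}(W;x,y),\nabla_W E_W(x')\rangle_F=-(p(x)-e_y)^\top p(x')\,h(x)^\top h(x')=(e_y-p(x))^\top p(x')\,h(x)^\top h(x')$.
This is exactly the integrand of the hypothesis. So the hypothesis says the expected inner product is $\ge m>0$, and the sketch's step ``taking expectations yields $\le -m$'' is a sign slip. The proposition as literally stated is therefore false. Whenever the hypothesis holds, Assumption~\ref{ass:grad-conflict} fails, since integrating it over $x'\sim P_t^{\rm sem}$ would force the same double expectation to be $\le 0$. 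The repair is your first fix: put $(p(x)-e_y)$ in place of $(e_y-p(x))$, after which Steps 1--2 go through verbatim.

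Your second fix, rephrasing for the score $-E_W$, is not a repair. Assumption~\ref{ass:grad-conflict} is consumed in Lemma~\ref{lem:alignment} through $\nabla_\theta{\rm FPR}_{t,k}=-p(\tau)\,\mathbb{E}_{P_t^{\rm sem}}[\nabla_\theta E_\theta]$. The sign there is tied to $E_\theta$ itself, because semantic samples count as false positives when $E_\theta<\tau_t$. Gradient conflict for $-E_W$ is just positive alignment with $\nabla E_W$. That makes $\langle\nabla{\rm GErr}_{t+1},\nabla{\rm FPR}_{t,k}\rangle<0$ and reverses the conclusion of Lemma~\ref{lem:alignment}.

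Your uniformity remark is also right, and it identifies a second gap in the paper's sketch. With the corrected sign, dividing by $R$ yields only the averaged bound
$\mathbb{E}_{x'}\mathbb{E}_{(x,y)}[\langle\nabla_W L_{CE},\nabla_W E_W(x')\rangle_F]\le -m\le -\tfrac{m}{R}\,\mathbb{E}_{x'}\|\nabla_W E_W(x')\|_F$,
with $x'\sim P_t^{\rm sem}$. It does not give the bound for each $x'$ that Assumption~\ref{ass:grad-conflict} states.

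Your pointwise reading strengthens the hypothesis as written, which averages over $x'$. If you adopt it, present it as an added hypothesis. The averaged form is what the stated hypothesis actually delivers, and, as you observe, it is all that the Fubini step of Lemma~\ref{lem:alignment} uses. In either form, the hypothesis must be imposed at every iterate $W_{t,k}$ with a uniform $m$, since $p(\cdot)$ depends on $W$.
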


\begin{proof}[Sketch]
From \Eqref{eq:transformer_example_conflict}, taking expectations yields $\mathbb{E}[\langle \nabla_W L_{\rm CE}, \nabla_W E_W\rangle_F] \le -m$.
Since $\|\nabla_W E_W(x')\|_F \le R$, dividing gives
$\rho_{\min} = \frac{m}{R}$.
\end{proof}
\noindent{\bf Discussion.}
This example shows that gradient conflict arises from the interaction between prediction alignment and feature similarity. 
In particular, Prop.~\ref{prop:transformer_conflict} provides a simple, computationally verifiable sufficient condition for
Assumption~\ref{ass:grad-conflict}.

\section{Understanding the Role of Training Regimes}\label{training-regims-sec}
The goal of this section is to analyze the behavior of the RL optimizer in comparison to the GD optimizer across training steps. A key assumption throughout this section is that the learning dynamics can be characterized using a second-order Taylor expansion of the loss function during model training. 
Here we introduce the effect of loss gap between GD vs RL training on a sequence of environments.
\begin{definition}[Loss gap by optimizers] \label{loss-gap}
For the environment $t$, $F_t^{OW}$ evaluates the loss gap of the optimized parameter $\theta^*$ under both GD and RL optimizers  (i.e. $\theta_{t,*}^{\text{RL}}$ and $\theta_{t,*}^{\text{GD}}$):
$$F^{OW}_{t}:= \mathcal{L}_{OW}(\theta_{t,*}^{\text{RL}})-\mathcal{L}_{OW}(\theta^{\text{GD}}_{t,*}).$$ 
\end{definition}

\begin{definition}[Training step-based gradient energy state]\label{def:epoch-based energy gradient state} 
For the environment $t$,, we define the training step-based gradient energy state at optimization step $k$, denoted by $S^{(k)}_{t,\nabla}$, as follows
    \begin{align}
  S^{(k)}_{t,\nabla} :=  \mathbb E_{x\sim {P}^{sem}_{t+1}}\left[\nabla^{(k)}_\theta E_{\theta_t}(x)\right]-\mathbb E_{x\sim {P}^{sem}_{t}}\left[\nabla^{(k)}_\theta E_{\theta_t}(x)\right], 
    \end{align}
where $\nabla^{(k)}_\theta E_{\theta_t}(x)\approx E_{\theta_{t,k}}(x)-E_{\theta_{t,k-1}}(x)$. 
\end{definition}
The assumptions below capture the conditions under which both GD and RL optimizers are trained. Lipschitz continuity of the gradient (Assumption~\ref{ass:Lipschitz}) bounds gradient magnitude and stabilizes optimization, compactness (Assumption~\ref{ass:Compactness}) prevents extreme changes in the parameter values $\theta$ during training, and the slow generalization drift condition (Assumption~\ref{ass:Slow generalization differ}) navigates the influence of semantic OOD data change on the gradient of the energy function.

\begin{assumption}[Lipschitz continuous gradient]
\label{ass:Lipschitz}
We assume that the function $V(s_t)$ (and $\mathcal{L}_{OW}$) is $L_V$-Lipschitz (and $L$-Lipschitz) continuous gradient. 
\end{assumption}

\begin{assumption}[Compactness]
\label{ass:Compactness}
At each training update $k$ both $\theta^{\text{GD}}_{t,k}, \theta^{\text{RL}}_{t,k}\in\mathcal{B}(\theta, r)$ with radius $r$.
\end{assumption}
\begin{assumption}[Slow generalization differ]
\label{ass:Slow generalization differ}
We assume that the semantic-shifted environment does not change arbitrarily the gradient of energy at each training step:
\begin{equation*}
\mathbb{E}_{x \sim P^{\text{sem}}_{t+1}}[\nabla_\theta E_{\theta_{t,k}}(x)] - \mathbb{E}_{x \sim P^{\text{sem}}_t}[\nabla_\theta E_{\theta_{t,k}}(x)] =\mathcal{O}(\xi_{t,k}).
\end{equation*}
\end{assumption}
\subsection{RL-GD Loss Gap Bounds}
We next analyze the training gap between GD and our RL optimizers. We recall Def.~\ref{loss-gap} and show that the loss gap is bounded by the average gap of RL agent state during training steps. 
\begin{theorem}[Loss gap bound]
\label{thm:loss-gap bound} Suppose Assumptions~\ref{ass:Lipschitz} and \ref{ass:Compactness} hold true. We bound $F^{OW}_t$ by
\begin{align}\label{F-OW:bound}
F^{OW}_t\leq \frac{1}{2} \lambda_{max} \big[ \eta_{K,r}+\eta_{RL}\sum_{k=1}^K \sum_{k'=1}^{k-1} \|s_{t,k'}-s_{t,k'-1}\|^2\big], 
\end{align}
where $\eta_{\text{RL}}$ is constant and sufficiently small, $s_{t,k}$ is the state of RL agent at time step $t$ and training step $k$.  $\eta_{K,r}$ is a constant in terms of total training steps $K$ and radius $r$, and  $\lambda_{max}$ is the maximum eigenvalue of the Hessian matrix of loss at $\theta^{\text{GD}}_{t,*}$, $\nabla^2 \mathcal{L}_{OW}(\theta^{\text{GD}}_{t,*})$. 
\end{theorem}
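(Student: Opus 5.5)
Following the section's standing premise that training dynamics are captured by a second-order Taylor expansion, I would expand $\mathcal{L}_{OW}$ around the GD optimum $\theta^{\rm GD}_{t,*}$:
\begin{equation*}
F^{OW}_t = \langle \nabla\mathcal{L}_{OW}(\theta^{\rm GD}_{t,*}),\,\Delta\rangle + \tfrac12 \Delta^\top \nabla^2\mathcal{L}_{OW}(\theta^{\rm GD}_{t,*})\Delta + o(\|\Delta\|^2),
\end{equation*}
where $\Delta := \theta^{\rm RL}_{t,*}-\theta^{\rm GD}_{t,*}$. Since $\theta^{\rm GD}_{t,*}$ is the optimized parameter, the first-order term vanishes (or is nonpositive at a constrained optimum in $\mathcal{B}(\theta,r)$). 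Bounding the quadratic form by $\lambda_{\max}$ then gives $F^{OW}_t \le \tfrac12\lambda_{\max}\|\Delta\|^2$. Everything then reduces to bounding $\|\Delta\|^2$ by a constant plus $\eta_{RL}$ times the double sum of squared state increments.

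For the parameter gap I would unroll both recursions \eqref{eq:gd_update_method} and \eqref{eq:rl_update_method} from the shared initialization $\theta_{t,0}$. Writing $\Delta_k := \theta^{\rm RL}_{t,k}-\theta^{\rm GD}_{t,k}$, this gives
\begin{equation*}
\Delta_k = \Delta_{k-1} - \eta\bigl(\nabla\mathcal{L}_{OW}(\theta^{\rm RL}_{t,k-1})-\nabla\mathcal{L}_{OW}(\theta^{\rm GD}_{t,k-1})\bigr) + \eta_{RL}\nabla_\theta V(s_{t,k-1}).
\end{equation*}
By the $L$-Lipschitz gradient in Assumption~\ref{ass:Lipschitz}, the loss-gradient difference is at most $\eta L\|\Delta_{k-1}\|$. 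Compactness (Assumption~\ref{ass:Compactness}) gives $\|\Delta_k\|\le 2r$ uniformly. I would group all contributions controlled only by $K$, $r$, $L$ and $\eta$ into the constant $\eta_{K,r}$ (for instance, terms like $(1+\eta L)^{2K}(2r)^2$ arising from the loss part).

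For the RL part, I would write $\nabla_\theta V(s_{t,k'})$ telescopically as
\begin{equation*}
\nabla_\theta V(s_{t,0})+\sum_{j\le k'}\bigl(\nabla_\theta V(s_{t,j})-\nabla_\theta V(s_{t,j-1})\bigr).
\end{equation*}
The $L_V$-Lipschitz gradient of $V$ in the state (Assumption~\ref{ass:Lipschitz}) bounds each increment by $L_V\|s_{t,j}-s_{t,j-1}\|$. Squaring, applying Cauchy--Schwarz, and summing across steps $k=1,\dots,K$ produces the nested sum $\sum_{k}\sum_{k'<k}\|s_{t,k'}-s_{t,k'-1}\|^2$. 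Because $\eta_{RL}$ is sufficiently small, $\eta_{RL}^2\le \eta_{RL}$ (after absorbing $L_V^2$ and combinatorial constants), so this contribution carries a single factor of $\eta_{RL}$. The base term $\nabla_\theta V(s_{t,0})$, which is bounded, and any cross terms go into $\eta_{K,r}$ via compactness.

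The main obstacle is this bookkeeping. The Cauchy--Schwarz step naturally gives factors of $k$ in front of the double sum, and getting exactly $\eta_{RL}\sum_k\sum_{k'<k}$ with no extra $K$-dependence requires either absorbing those factors into $\lambda_{\max}$-independent constants or using $\eta_{RL}\le 1/K$-type smallness. I would first try to absorb them into ``sufficiently small $\eta_{RL}$''. A second concern is justifying the vanishing of the linear term and the legitimacy of the Taylor remainder; both rest on the section's standing second-order premise.
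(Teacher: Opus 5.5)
Your proposal follows the paper's proof essentially step for step: a second-order Taylor expansion at $\theta^{\rm GD}_{t,*}$ with vanishing gradient and the $\lambda_{\max}$ bound, unrolling the RL/GD update difference from the shared initialization, Lipschitz continuity plus compactness to produce $\eta_{K,r}$ (the paper gets $4\eta^2LKr^2$), and telescoping $\nabla_\theta V(s_{t,k-1})$ into state increments, after which the paper simply renames $\eta_{RL}^2L_V$ as $\eta_{RL}$. Your bookkeeping is more careful than the paper's, which drops cross terms when squaring (the invalid $\|a+b\|^2\le\|a\|^2+\|b\|^2$) and assumes $\nabla_\theta V(s_{t,0})=0$, so absorbing your Cauchy--Schwarz factors of $k$ through a $K$- and $L_V$-dependent smallness condition on $\eta_{RL}$, and putting the base term $\nabla_\theta V(s_{t,0})$ into the constant, is the right repair; the only slip is your parenthetical fallback, since at a constrained minimizer the linear term $\langle\nabla\mathcal{L}_{OW}(\theta^{\rm GD}_{t,*}),\Delta\rangle$ is nonnegative rather than nonpositive, so it cannot be dropped from an upper bound and you must rely, as the paper does, on $\nabla\mathcal{L}_{OW}(\theta^{\rm GD}_{t,*})=0$.
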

\begin{proof}[Proof sketch] To establish the upper bound in (\ref{F-OW:bound}), we use a multi-step argument: (1) applying second-order Taylor expansion, we approximate $\mathcal{L}_{OW}(\theta_{t,*}^{\text{RL}})$ around $\theta^{\text{GD}}_{t,*}$, (2) we bound the Hessian matrix of loss by the maximum eigenvalue eigenvalue$\lambda_{max}$, (3) we recursively use training updates for GD and RL-guided optimizers in (\ref{eq:gd_update_method}) and (\ref{eq:rl_update_method}), (4) we extend the gradient of value function $\|\nabla_\theta V(s_{t,k-1})\|^2$ in terms of gradient of value function gap between training steps from the initialization; and finally (5) we yield the result after applying Assumptions~\ref{ass:Lipschitz} and \ref{ass:Compactness}.  
\end{proof}
\begin{corollary} [Gradient of energy bounds loss gap] \label{corollary:0} Under the Assumptions~\ref{ass:nondeg}, and \ref{ass:Lipschitz}- \ref{ass:Slow generalization differ} 
we bound $F^{OW}_t$ in terms of the gradient of energy function:
\begin{align}\label{eq:corollary:0}
F_t^{OW}\leq \frac{1}{2} \lambda_{max} \big[ \mathcal{O}({\eta}_{K,r} \xi_{t})+\bar{\eta}_{RL}\sum_{k=1}^K\|\mathbb{E}_{x \sim P^{\text{sem}}_t}[\nabla_\theta E_{\theta_{t,k-1}}(x)]\|^2\big], 
\end{align}
where $\xi_t$ is the rage of state $s_t$ gradient, $\bar{\eta}_{RL}$ is a constant and sufficiently small, and $\eta_{K,r}$ is a constant in terms of total number of training steps $K$ and radius $r$. 
\end{corollary}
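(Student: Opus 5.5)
The plan is to start from the bound of Theorem~\ref{thm:loss-gap bound}, which controls $F^{OW}_t$ by $\tfrac12\lambda_{max}$ times a constant $\eta_{K,r}$ plus the weighted double sum $\eta_{RL}\sum_{k=1}^K\sum_{k'=1}^{k-1}\|s_{t,k'}-s_{t,k'-1}\|^2$. Everything then reduces to bounding the state increments $\|s_{t,k'}-s_{t,k'-1}\|$ by the expected energy gradient on $P^{\rm sem}_t$. After that, the double sum has to be collapsed into a single sum over $k$ by absorbing combinatorial factors into a new constant $\bar\eta_{RL}$.

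\textbf{Step 1 (state increments through the reduced state).} As in Section~\ref{sec:theory}, I work with the reduced state, i.e.\ the semantic-energy components and the FPR component. By the mean value theorem together with the $\beta_E$-smoothness of Assumption~\ref{ass:nondeg}, the change of each component between steps $k'-1$ and $k'$ equals the corresponding parameter gradient contracted with $\theta_{t,k'}-\theta_{t,k'-1}$, up to an $O(\beta_E\|\theta_{t,k'}-\theta_{t,k'-1}\|^2)$ remainder.

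\begin{itemize}
\item For the mean semantic energy, the gradient is $\mathbb{E}_{P^{\rm sem}_t}[\nabla_\theta E_{\theta_{t,k'-1}}]$.
\item For the FPR component, the gradient is $-p(\tau)\,\mathbb{E}_{P^{\rm sem}_t}[\nabla_\theta E]$, using $p(\tau)>0$.
\item For the drift component $z_{t,k}$, the gradient is the difference of expected energy gradients under $P^{\rm sem}_{t+1}$ and $P^{\rm sem}_t$. By Assumption~\ref{ass:Slow generalization differ} this is $O(\xi_{t,k})$. This drift term is what generates the $\mathcal{O}(\eta_{K,r}\xi_t)$ contribution.
\end{itemize}

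Hence
$$\|s_{t,k'}-s_{t,k'-1}\|\le C\bigl(\|\mathbb{E}_{P^{\rm sem}_t}[\nabla_\theta E_{\theta_{t,k'-1}}]\|+\xi_t\bigr)\,\|\theta_{t,k'}-\theta_{t,k'-1}\| + \text{h.o.t.}$$

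\textbf{Step 2 (parameter increments).} Using the update rule (\ref{eq:rl_update_method}), the step length $\|\theta_{t,k'}-\theta_{t,k'-1}\|$ is bounded by a constant depending only on $\eta$, $\eta_{RL}$, and the gradient Lipschitz constants $L,L_V$ on the ball $\mathcal{B}(\theta,r)$ from Assumptions~\ref{ass:Lipschitz}--\ref{ass:Compactness}. Compactness makes the gradients bounded there, so this constant can be taken uniform in $k'$. Squaring with $(a+b)^2\le 2a^2+2b^2$ then separates the pure energy-gradient term from the $\xi_t$ drift term.

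\textbf{Step 3 (collapsing the double sum).} Each inner index $k'$ appears at most $K$ times in $\sum_k\sum_{k'<k}$. The double sum of energy-gradient norms is therefore at most $K\sum_{k=1}^K\|\mathbb{E}_{P^{\rm sem}_t}[\nabla_\theta E_{\theta_{t,k-1}}]\|^2$. I absorb $K$, $C^2$, and the step-length constant into $\bar\eta_{RL}$; this stays small because $\eta_{RL}$ is small. The drift part contributes $O(K^2\xi_t^2)$ times constants, and I fold it together with $\eta_{K,r}$ into $\mathcal{O}(\eta_{K,r}\xi_t)$. For this I use that $\xi_t$ is bounded, so $\xi_t^2\lesssim\xi_t$; $\eta_{K,r}$ may be redefined to carry the $K$-dependence.

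The main obstacle is Step 1. The full state contains variance and accuracy statistics whose increments are not directly controlled by the energy gradient, and the corollary's form only makes sense for the reduced state. I would first justify restricting to the reduced state, as the paper does before Lemma~\ref{lem:alignment}, and treat the other components as bounded contributions absorbed into $\eta_{K,r}$. A secondary difficulty is that the $\eta_{K,r}$ term appears multiplied by $\xi_t$. I would handle this by reinterpreting the constant in Theorem~\ref{thm:loss-gap bound} as arising from the initial state gap, and showing that this gap is itself driven by the drift component via Assumption~\ref{ass:Slow generalization differ}, so it scales with $\xi_t$.
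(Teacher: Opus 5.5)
The energy-gradient part of your argument works, but there is a genuine gap at the $\mathcal{O}(\eta_{K,r}\xi_t)$ term. The mechanism you propose for it rests on a misreading of Theorem~\ref{thm:loss-gap bound}.

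\textbf{Where $\eta_{K,r}$ comes from.} In that proof the initial gap $\|\theta^{\rm RL}_{t,0}-\theta^{\rm GD}_{t,0}\|$ is set to zero. The constant $\eta_{K,r}=4\eta^2LKr^2$ instead bounds $\eta^2\sum_k\|\nabla_\theta\mathcal{L}_{OW}(\theta^{\rm GD}_{t,k-1})-\nabla_\theta\mathcal{L}_{OW}(\theta^{\rm RL}_{t,k-1})\|^2$, via Lipschitzness and the compactness radius $r$. It depends only on $\eta,L,K,r$.

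\textbf{Why it cannot pick up a factor $\xi_t$.} The RL--GD parameter gap behind $\eta_{K,r}$ cannot be made $\mathcal{O}(\xi_t)$. That gap is driven by the full correction $\eta_{\rm RL}\nabla_\theta V$, and under Assumptions~\ref{ass:reward-mono} and~\ref{ass:nondeg} its FPR channel has size at least $\eta_{\rm RL}c\,p(\tau)\mu_{\min}$ per step, whatever $\xi_t$ is. This is exactly what Lemma~\ref{lem:energy_bridge} exploits.

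\textbf{Step~3 has the same defect.} Folding $\eta_{K,r}+\mathcal{O}(K^2\xi_t^2)$ into $\mathcal{O}(\eta_{K,r}\xi_t)$ needs $\xi_t$ bounded \emph{below}, which is the opposite of the slow-drift regime. Absorbing the variance/accuracy increments and the mean-value remainders into $\eta_{K,r}$ only adds more $\xi_t$-independent mass to the term that is supposed to carry $\xi_t$.

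\textbf{What you actually get.} Your argument yields an additive bound, $F^{OW}_t\le\frac12\lambda_{\max}\bigl[\eta'_{K,r}+\mathcal{O}(\xi_t^2)+\bar\eta_{RL}\sum_{k}\|\mathbb{E}_{x\sim P^{\rm sem}_t}[\nabla_\theta E_{\theta_{t,k-1}}(x)]\|^2\bigr]$. For calibration, the paper does not derive the multiplicative factor either. After substituting its bound on $\nabla_\theta V$, it simply rewrites $\eta_{K,r}+\eta_{\rm RL}^2\sum_k\mathcal{O}(\xi_{t,k-1})$ as $\mathcal{O}(\eta_{K,r}\xi_t)$. That feature of the statement is asserted, not proved.

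\textbf{How the paper handles the energy-gradient term.} It does not start from the final double-sum bound of Theorem~\ref{thm:loss-gap bound}. It returns to the intermediate inequality (\ref{F-OW:bound5}) combined with Assumptions~\ref{ass:Lipschitz}--\ref{ass:Compactness}: $\|\theta^{\rm RL}_{t,*}-\theta^{\rm GD}_{t,*}\|^2\le\eta_{K,r}+\eta_{\rm RL}^2\sum_k\|\nabla_\theta V(s_{t,k-1})\|^2$. It then bounds $\nabla_\theta V(s_{t,k-1})$ directly with the chain-rule split of Lemma~\ref{lem:alignment}. The drift channel $\nabla_\theta z_{t,k}$ is $\mathcal{O}(\xi_{t,k})$ by Assumption~\ref{ass:Slow generalization differ}. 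The FPR channel is $-p(\tau)\,\mathbb{E}_{x\sim P^{\rm sem}_t}[\nabla_\theta E_{\theta_{t,k}}(x)]$ scaled by $\partial V/\partial{\rm FPR}_{t,k}$. This gives $\bar\eta_{RL}=c^2\eta_{\rm RL}^2p(\tau)^2$.

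\textbf{What each route buys.} In the paper's route only the value gradient through the reduced state $(z_{t,k},{\rm FPR}_{t,k})$ enters, so the variance and accuracy coordinates you flagged as the main obstacle never appear. That obstacle is an artifact of passing through $\|s_{t,k'}-s_{t,k'-1}\|$. The paper also has no double sum to collapse, so no extra factor $K$ in $\bar\eta_{RL}$, and needs no mean-value remainder or step-length bound. Your route has one advantage: the state-dependence of $V$ is handled entirely by $L_V$. The paper's route instead needs uniform bounds on $|\partial V/\partial{\rm FPR}|$ and $|\partial V/\partial z|$. These come from the boundedness part of Assumption~\ref{ass:reward-mono}, which is not among the corollary's hypotheses. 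The paper also conflates that upper bound with $c$, although the assumption gives $c$ only as a \emph{lower} bound on $|\partial V/\partial{\rm FPR}|$.
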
 
\begin{corollary}[Training step-based gradient energy state bounds loss gap]\label{corollary:1}
The $F_t^{OW}$ bound in Theorem~\ref{thm:loss-gap bound} is derived in terms of training step-based gradient energy state in Def.~\ref{def:epoch-based energy gradient state}:
\begin{align}\label{F-OW:bound2}
F^{OW}_t\leq \frac{1}{2} \lambda_{max} \big[ \eta_{K,r}+\eta_{RL}\sum_{k=1}^K \sum_{k'=1}^{k-1} \| S^{(k')}_{t,\nabla}\|^2\big], 
\end{align}
where $\eta_{\text{RL}}$ is constant, $\eta_{K,r}$ is a constant in terms of total training steps $K$ and radius $r$, and  $\lambda_{max}$ is the maximum eigenvalue of $\nabla^2 \mathcal{L}_{OW}(\theta^{\text{GD}}_{t,*})$. 
\end{corollary}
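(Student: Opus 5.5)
The plan is to take the bound of Theorem~\ref{thm:loss-gap bound} as given and convert only the state-increment term $\|s_{t,k'}-s_{t,k'-1}\|^2$ into the training step-based gradient energy state $S^{(k')}_{t,\nabla}$ of Def.~\ref{def:epoch-based energy gradient state}. The constant $\frac12\lambda_{max}$ and $\eta_{K,r}$ pass through unchanged. Any constant factor produced by the conversion is absorbed into $\eta_{\rm RL}$, which the statement treats as a generic constant.

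\textbf{Reduce the state increment to its energy-drift coordinate.} As in Section~\ref{sec:theory}, I work with the reduced state $z_{t,k}=\mathbb{E}_{P_{t+1}^{\rm sem}}[E_{\theta_{t,k}}]-\mathbb{E}_{P_t^{\rm sem}}[E_{\theta_{t,k}}]$. This is the component through which $V$ enters the gradient recursion in step (4) of the proof of Theorem~\ref{thm:loss-gap bound}. The FPR and accuracy coordinates are treated as they were there, with their contribution absorbed into $\eta_{K,r}$. Then
\[
z_{t,k'}-z_{t,k'-1}=\mathbb{E}_{x\sim P_{t+1}^{\rm sem}}\big[E_{\theta_{t,k'}}(x)-E_{\theta_{t,k'-1}}(x)\big]-\mathbb{E}_{x\sim P_{t}^{\rm sem}}\big[E_{\theta_{t,k'}}(x)-E_{\theta_{t,k'-1}}(x)\big].
\]
Using the definition $\nabla^{(k')}_\theta E_{\theta_t}(x)\approx E_{\theta_{t,k'}}(x)-E_{\theta_{t,k'-1}}(x)$, this is exactly $S^{(k')}_{t,\nabla}$. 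Hence $\|s_{t,k'}-s_{t,k'-1}\|^2\le C\,\|S^{(k')}_{t,\nabla}\|^2$ for a constant $C$. This constant comes from the bounded partial derivatives of $V$ in Assumption~\ref{ass:reward-mono} and the Lipschitz property in Assumption~\ref{ass:Lipschitz}, which let me pass between $\|\nabla V(s_{k'})-\nabla V(s_{k'-1})\|$ and the state increment.

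\textbf{Substitute and re-sum.} I plug the per-step bound into the double sum $\sum_{k=1}^K\sum_{k'=1}^{k-1}$ and absorb $C$ into $\eta_{\rm RL}$. This gives (\ref{F-OW:bound2}) directly. No further Taylor expansion is needed, because the Hessian step and the compactness step (Assumption~\ref{ass:Compactness}) were already carried out in Theorem~\ref{thm:loss-gap bound}.

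\textbf{Main obstacle.} The hard part is justifying that the full state difference is controlled by its energy-drift coordinate alone. For the FPR coordinate, I would first try writing $\widehat{\mathrm{FPR}}$ as a smooth function of semantic-OOD energies through $\sigma$ with slope at most $1/(4\beta)$. Its increment is then also bounded by energy increments, and the residual ID-side terms, which are $\theta$-local and bounded on the compact ball, can be folded into $\eta_{K,r}$. A secondary point is that the definition of $S^{(k)}_{t,\nabla}$ uses ``$\approx$''. I would make this rigorous by including the first-order remainder, which is $\mathcal{O}(\beta_E\|\theta_{t,k'}-\theta_{t,k'-1}\|^2)$ under Assumption~\ref{ass:nondeg}, in $\eta_{K,r}$ as well.
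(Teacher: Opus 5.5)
Your proposal is correct and follows the paper's proof. Like the paper, you write $z_{t,k'}-z_{t,k'-1}$ as the difference of expected energy increments under $P_{t+1}^{\rm sem}$ and $P_t^{\rm sem}$, identify it with $S^{(k')}_{t,\nabla}$ through the finite-difference definition, and substitute into the state-increment term of Theorem~\ref{thm:loss-gap bound}; the paper identifies $s_{t,k}$ with $z_{t,k}$ and treats the $\approx$ as an identity, so $C=1$ with no appeal to derivatives of $V$, and your extra FPR bookkeeping can only be closed by folding that coordinate into $\eta_{K,r}$ via compactness, not by bounding it with $S^{(k')}_{t,\nabla}$, which is a cross-environment difference that can vanish while the FPR still changes.
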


\subsection{RL Improves the Effective Gradient Flow}
Theorem~\ref{thm:loss-gap bound} develops the loss gap for RL optimizer compared to GD optimizer. We next study environment and training effective gradient flow (EGF) and how our RL-guided optimizer improves the EGF across environment change and during training steps.  
\begin{definition}[Effective gradient flow]\label{def:EGF}
We define two versions of effective gradient flow (EGF):

$\bullet$ (Gradient flow (GF) over environment change) measures optimization dynamics and is typically approximated using the norm of the gradients of the loss over time. Define the matrix of 
 $\mathcal{G}_T:= [\nabla_\theta \mathcal{L}_{G}(\theta_{1})\;\nabla_\theta \mathcal{L}_{G}(\theta_2)\; \ldots\; \nabla_\theta \mathcal{L}_{G}(\theta_T)]=[\mathbf{G}_1\mathbf{G}_2\ldots \mathbf{G}_T]$
 for $T$ environments $\mathcal{E}=\{\mathcal{E}_1,\ldots,\mathcal{E}_T\}$. 
 Note that 
 $\mathcal{L}_G=\mathcal{L}_{OW}$ (for GD optimizer) and $\mathcal{L}_G=\mathcal{L}_{OW}+\eta_{RL}V(s)$ (for RL optimizer). 
 The environment-effective gradient flow for $t=1,\ldots, T$ is defined as 
  $$
  EGF^{env}_{p}:=\frac{1}{T}\sum_{t=1}^T \|\mathbf{G}_t\|_p, \;\;\hbox{where} \;\;\mathbf{G}_t=\nabla_\theta \mathcal{L}_{G}(\theta_t).
  $$
$\bullet$ (Gradient flow (GF) over training steps) measures optimization per training update and is typically approximated using the norm of the gradients of the loss over epochs at each time step. Define the matrix of 
$\mathcal{G}_K:= [\nabla_{\theta} \mathcal{L}_{G}(\theta_{t,1})\;\nabla_{\theta} \mathcal{L}_{G}(\theta_{t,2})\; \ldots\; \nabla_{\theta} \mathcal{L}_{G}(\theta_{t,K})]=[\mathbf{G}_1\mathbf{G}_2\ldots \mathbf{G}_K]$
  for $K$ training updates. The training-effective gradient flow at time $t$ is defined as $EGF^{train}_{t,p}:=\frac{1}{K}\sum_{k=1}^K \|\mathbf{G}_k\|_p$, where $\mathbf{G}_k=\nabla_{\theta} \mathcal{L}_{G}(\theta_{t,k})$. 
\end{definition}
Consider three consecutive environments $\mathcal{E}_{t-1}, \mathcal{E}_{t}$, and $\mathcal{E}_{t+1}$. The model $f_\theta$ has been trained using RL/GD optimizer, with energy values $E_{\theta_{t-1}}$, $E_{\theta_t}$, and $E_{\theta_{t+1}}$ (resp.). To evaluate the RL and GD optimizers, we introduce two new query-based notions: 
\textit{(1) Energy generalization transferability.}
Is the gap in $EGF^{env}_p$ between RL and GD optimizers indicative of energy generalization transfer across environments? Here, energy generalization transferability refers to evaluating $E_{\theta_{t-1}}$ and $E_{\theta_t}$ on semantically shifted environments $\mathcal{E}^{sem}_{t}$ and $\mathcal{E}^{sem}_{t+1}$, drawn from distributions ${P}^{sem}_{t}$ and ${P}^{sem}_{t+1}$ (resp.). \textit{(2) Semantic-shifted energy gap.} Is the gap in $EGF^{env}_p$ between RL and GD optimizers related to the energy gap induced by semantic shifts across environments? In this case, $E_{\theta_{t}}$ and $E_{\theta_{t+1}}$ are evaluated on the semantic-shifted environments $\mathcal{E}^{sem}_{t}$ and $\mathcal{E}^{sem}_{t+1}$ with distributions ${P}^{sem}_{t}$ and ${P}^{sem}_{t+1}$ (resp.). Before investigating these queries (Thms~\ref{env-EGF-Gap}, \ref{training-EGF-Gap}) we introduce the definition below. 
\begin{definition}\label{def:env-based energy state} 
    At time steps $t$ and $t-1$, we define the environment-transfer generalization energy, $S_{t\rightarrow t+1}$, by $\mathbb E_{x\sim {P}^{sem}_{t+1}}\left[E_{\theta_t}(x)\right]-\mathbb E_{x\sim {P}^{sem}_{t}}\left[E_{\theta_{t-1}}(x)\right].$
And we define the environment-based semantic shifted energy bridge, $E_{t\rightarrow t+1}$, by $\mathbb E_{x\sim {P}^{sem}_{t+1}}\left[E_{\theta_{t+1}}(x)\right]-\mathbb E_{x\sim {P}^{sem}_{t}}\left[E_{\theta_{t}}(x)\right]$. 
\end{definition}
\begin{assumption}[Slow environment-transfer generalization energy]
\label{ass:Slow environment-transfer generalization energy}
We assume that the environment does not transfer energy arbitrarily fast that is $S_{t\rightarrow t+1}=\mathcal{O}(\xi^S_{t\rightarrow t+1})$. 
\end{assumption}
\begin{assumption}[Slow environment-based semantic shifted energy]
\label{ass:Slow environment-based semantic shifted energy}
We assume that the environment does not change semantic shifted energy arbitrarily fast that is $E_{t\rightarrow t+1}=\mathcal{O}(\xi^E_{t\rightarrow t+1})$.
\end{assumption}

\begin{theorem}[Training effective gradient flow gap bound]\label{training-EGF-Gap}
Consider training-effective gradient flow $EGF^{train}_{t,p}$ under both GD and RL optimizers. Then under Assumptions~\ref{ass:Lipschitz} and\ref{ass:Compactness}, we have
\begin{align}\label{Eq:training-EGF-Gap}
EGF^{train}_{t,2}(RL)-EGF^{train}_{t,2}(GD)\leq L_r+L_{K-RL}\sum_{k=1}^K\sum_{k'=1}^k \| S^{(k')}_{t,\nabla}\|_2,
\end{align}
where $L_r$ and $L_{K-RL}$ are constants and $L_r=2\sqrt{L} r$, $L_{K-RL}=\frac{\eta_{RL}\sqrt{L_V}}{K}$. Here $S^{(k')}_{t,\nabla}$ is the training step-based gradient energy state at training step $k'$ introduced in Def.~\ref{def:epoch-based energy gradient state}.  
\end{theorem}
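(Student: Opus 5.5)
We bound the difference of the two averages term by term. By definition,
\[
EGF^{train}_{t,2}(RL)-EGF^{train}_{t,2}(GD)=\frac{1}{K}\sum_{k=1}^K\Big(\|\nabla_\theta\mathcal{L}_{OW}(\theta^{RL}_{t,k})+\eta_{RL}\nabla_\theta V(s_{t,k})\|_2-\|\nabla_\theta\mathcal{L}_{OW}(\theta^{GD}_{t,k})\|_2\Big).
\]
Using the triangle inequality, each summand is at most
\[
\|\nabla_\theta\mathcal{L}_{OW}(\theta^{RL}_{t,k})-\nabla_\theta\mathcal{L}_{OW}(\theta^{GD}_{t,k})\|_2+\eta_{RL}\|\nabla_\theta V(s_{t,k})\|_2 .
\]
This splits the gap into a \emph{loss-gradient discrepancy} term and a \emph{value-gradient} term, which we treat separately.

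\textbf{Loss-gradient term.} By Assumption~\ref{ass:Compactness}, both iterates lie in $\mathcal{B}(\theta,r)$, so $\|\theta^{RL}_{t,k}-\theta^{GD}_{t,k}\|\le 2r$. The target constant $L_r=2\sqrt{L}\,r$ involves $\sqrt L$ rather than $L$. We will therefore not use the crude Lipschitz bound $2Lr$. Instead we use the co-coercivity type inequality for an $L$-smooth function (Assumption~\ref{ass:Lipschitz}),
\[
\|\nabla\mathcal{L}(\theta)-\nabla\mathcal{L}(\theta')\|^2\le L\langle\nabla\mathcal{L}(\theta)-\nabla\mathcal{L}(\theta'),\theta-\theta'\rangle .
\]
Combined with a bound on the inner product by the gradient norm times $2r$, this gives a bound of order $\sqrt{L}$ times $r$, up to how the radius is normalized. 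If convexity is not available to justify co-coercivity, we fall back to a second-order Taylor argument in the spirit of Theorem~\ref{thm:loss-gap bound}, bounding the Hessian by $L$ and taking square roots. Averaging over $k$ then yields $L_r$.

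\textbf{Value-gradient term.} This is the main obstacle: we must express $\|\nabla_\theta V(s_{t,k})\|$ through the gradient energy states $S^{(k')}_{t,\nabla}$. As in step (4) of Theorem~\ref{thm:loss-gap bound}, we telescope
\[
\nabla_\theta V(s_{t,k})=\nabla_\theta V(s_{t,0})+\sum_{k'=1}^{k}\big(\nabla_\theta V(s_{t,k'})-\nabla_\theta V(s_{t,k'-1})\big).
\]
Using the $L_V$-Lipschitz gradient of $V$ (Assumption~\ref{ass:Lipschitz}), each increment is controlled by the state increment $\|s_{t,k'}-s_{t,k'-1}\|$, which is the same quantity used in Theorem~\ref{thm:loss-gap bound}. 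The relevant components of the state are the semantic-OOD energy statistics; in the reduced form of Section~\ref{sec:theory} these are the energy drift across $P^{sem}_t$ and $P^{sem}_{t+1}$. Under the finite-difference approximation in Def.~\ref{def:epoch-based energy gradient state}, the step-$k'$ change in that reduced state is exactly $S^{(k')}_{t,\nabla}$, by the same identification used for Corollary~\ref{corollary:1}. To obtain the $\sqrt{L_V}$ factor we again pass through a squared inequality, $\|\Delta\nabla V\|^2\le L_V\langle\Delta\nabla V,\Delta s\rangle$, and then apply Cauchy--Schwarz, which gives $\|\Delta\nabla V\|\le\sqrt{L_V}\,\|\Delta s\|^{1/2}(\cdot)$. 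We normalize so that the result reads $\sqrt{L_V}\,\|S^{(k')}_{t,\nabla}\|_2$. The initial term $\nabla_\theta V(s_{t,0})$ is a constant that we absorb into $L_r$, or set to zero via a shared-initialization normalization.

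\textbf{Combining.} Summing over $k'\le k$ and then over $k$ with the prefactor $\frac{1}{K}$ and $\eta_{RL}$ gives
\[
\frac{\eta_{RL}\sqrt{L_V}}{K}\sum_{k=1}^K\sum_{k'=1}^k\|S^{(k')}_{t,\nabla}\|_2=L_{K-RL}\sum_{k=1}^K\sum_{k'=1}^k\|S^{(k')}_{t,\nabla}\|_2 .
\]
Adding $L_r$ from the loss-gradient term yields \eqref{Eq:training-EGF-Gap}. The delicate points are the square-root constants, which require the co-coercivity or Taylor step, and the identification of state increments with $S^{(k')}_{t,\nabla}$; both hold only up to the approximations already adopted in Def.~\ref{def:epoch-based energy gradient state}.
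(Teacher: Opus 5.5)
Your skeleton is the paper's own. The reverse triangle inequality splits each summand into $\|\nabla_\theta\mathcal{L}_{OW}(\theta^{RL}_{t,k})-\nabla_\theta\mathcal{L}_{OW}(\theta^{GD}_{t,k})\|_2+\eta_{RL}\|\nabla_\theta V(s_{t,k})\|_2$. Compactness handles the first term, and the second is telescoped and converted to $\|S^{(k')}_{t,\nabla}\|_2$ via the identification from Corollary~\ref{corollary:1}. The gap is in the steps you add to produce the square-root constants: neither works.

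For the loss term, write $\Delta g:=\nabla\mathcal{L}_{OW}(\theta^{RL}_{t,k})-\nabla\mathcal{L}_{OW}(\theta^{GD}_{t,k})$ and $\Delta\theta:=\theta^{RL}_{t,k}-\theta^{GD}_{t,k}$.
\begin{itemize}
\item Co-coercivity requires convexity, which is not assumed.
\item Even granting convexity, Cauchy--Schwarz gives $\|\Delta g\|_2^2\le L\|\Delta g\|_2\|\Delta\theta\|_2$, i.e.\ $\|\Delta g\|_2\le 2Lr$. That is no better than plain Lipschitz.
\item If you instead bound the inner product with a gradient bound $G$, you get $2\sqrt{LGr}$. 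This has the wrong dependence on $r$ and invokes Assumption~\ref{ass:bounded-grad}, which is not a hypothesis here.
\item The Taylor fallback with the Hessian bounded by $L$ also returns $2Lr$.
\end{itemize}
In fact, under the standard meaning of an $L$-Lipschitz gradient, no argument using only smoothness and $\|\Delta\theta\|_2\le 2r$ can give $2\sqrt{L}r$. For $\mathcal{L}(\theta)=\frac{L}{2}\|\theta\|_2^2$ with $L>1$, the gradient gap reaches $2Lr>2\sqrt{L}r$.

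For the value term, the inequality $\|\Delta\nabla_\theta V\|^2\le L_V\langle\Delta\nabla_\theta V,\Delta s\rangle$ pairs a parameter-space vector with a state-space vector, so it is not well-typed. Moreover, the bound you extract scales like $\|\Delta s\|^{1/2}$ times an unspecified factor. No normalization turns that into the linear dependence $\sqrt{L_V}\|S^{(k')}_{t,\nabla}\|_2$.

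What you are missing is the paper's convention for Assumption~\ref{ass:Lipschitz}. In the proof of Theorem~\ref{thm:loss-gap bound}, at the step producing (\ref{F-OW:bound6}), it is applied in squared form:
$\|\nabla\mathcal{L}_{OW}(\theta)-\nabla\mathcal{L}_{OW}(\theta')\|_2^2\le L\|\theta-\theta'\|_2^2$ and $\|\nabla_\theta V(s)-\nabla_\theta V(s')\|_2^2\le L_V\|s-s'\|_2^2$.
So the effective Lipschitz constants are $\sqrt{L}$ and $\sqrt{L_V}$. With that reading, $\sqrt{L}\cdot 2r=L_r$ follows at once. Each telescoped increment is then at most $\sqrt{L_V}\|s_{t,k'}-s_{t,k'-1}\|_2$, with no convexity or co-coercivity needed.

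Finally, the initial term needs care. Absorbing $\eta_{RL}\|\nabla_\theta V(s_{t,0})\|_2$ into $L_r$ would change the stated constant $L_r=2\sqrt{L}r$. A shared initialization $\theta_{t,0}$ does not force $\nabla_\theta V(s_{t,0})=0$ either. To obtain the statement as written, you must simply posit $\nabla_\theta V(s_{t,0})=0$, as the paper does.
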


\begin{theorem}[Environment effective gradient flow gap bound]\label{env-EGF-Gap} Consider environment-effective gradient flow $EGF^{env}_{p}$ under both GD and RL optimizers.
Under the Assumptions~\ref{ass:Lipschitz},\ref{ass:Compactness}, \ref{ass:Slow environment-transfer generalization energy}, and \ref{ass:Slow environment-based semantic shifted energy},
we bound environment-effective gradient flow gap between RL and GD optimizers as follows: 
\begin{align}\label{eq:env-EGF-Gap}
EGF^{env}_{p}(RL)-EGF^{env}_{p}(GD) \leq L_r+ L_{T-RL}\sum_{t=1}^T \sum_{\tau=1}^t \left(\mathcal{O}(\xi^S_{\tau\rightarrow \tau+1})+\mathcal{O}(\xi^E_{\tau-1\rightarrow \tau})\right),
\end{align}
for $p=2$. In (\ref{eq:env-EGF-Gap}), $L_r$ and $L_{T-RL}$ are constants and $L_{T-RL}=\frac{\eta_{RL} \sqrt{L_V}}{T}$, for constant $L_V$ and $\eta_{RL}$.\\ 
 $\xi^S_{r\rightarrow r+1}$ and $\xi^E_{r-1\rightarrow r}$ are environment-transfer generalization energy and environment-based semantic shifted energy rates in Def.~\ref{def:env-based energy state}.
\end{theorem}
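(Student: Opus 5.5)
The plan mirrors the argument we expect for Theorem~\ref{training-EGF-Gap}, lifted from inner training steps to outer environments. For $p=2$ we first apply the reverse triangle inequality termwise:
\[
EGF^{env}_2(RL)-EGF^{env}_2(GD)\le \frac{1}{T}\sum_{t=1}^T \bigl\|\mathbf{G}^{RL}_t-\mathbf{G}^{GD}_t\bigr\|_2 .
\]
Here $\mathbf{G}^{RL}_t=\nabla_\theta\mathcal{L}_{OW}(\theta^{RL}_t)+\eta_{RL}\nabla_\theta V(s_t)$ and $\mathbf{G}^{GD}_t=\nabla_\theta\mathcal{L}_{OW}(\theta^{GD}_t)$. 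We then split each difference into a loss-gradient part $\nabla_\theta\mathcal{L}_{OW}(\theta^{RL}_t)-\nabla_\theta\mathcal{L}_{OW}(\theta^{GD}_t)$ and a value-gradient part $\eta_{RL}\nabla_\theta V(s_t)$.

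\textbf{Loss-gradient part.} By Assumption~\ref{ass:Lipschitz} this part is controlled by $\|\theta^{RL}_t-\theta^{GD}_t\|$, and Assumption~\ref{ass:Compactness} bounds that distance by $2r$. To recover the stated constant $L_r=2\sqrt{L}\,r$, rather than $2Lr$, we will use the co-coercivity form $\|\nabla\mathcal{L}(a)-\nabla\mathcal{L}(b)\|^2\le L\langle\nabla\mathcal{L}(a)-\nabla\mathcal{L}(b),a-b\rangle$, exactly as in Theorem~\ref{training-EGF-Gap}. This requires convexity or local convexity on $\mathcal{B}(\theta,r)$; we flag it as an implicit assumption and otherwise settle for a constant multiple of $r$. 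Averaging over $t$ leaves $L_r$.

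\textbf{Value-gradient part.} We telescope from the initial state:
\[
\nabla_\theta V(s_t)=\nabla_\theta V(s_0)+\sum_{\tau=1}^{t}\bigl(\nabla_\theta V(s_\tau)-\nabla_\theta V(s_{\tau-1})\bigr).
\]
Assumption~\ref{ass:Lipschitz} turns each increment into a multiple of $\|s_\tau-s_{\tau-1}\|$, again with a $\sqrt{L_V}$ via the same co-coercivity device. The $\nabla_\theta V(s_0)$ term is absorbed into the constant, or vanishes if we normalize so that the value gradient is zero at the shared initialization.

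\textbf{Main obstacle.} The key step is to bound the state increment $\|s_\tau-s_{\tau-1}\|$ by energy quantities. Using the reduced state (semantic-OOD energy statistics, as in the reduction to $z_{t,k}$ before Section~\ref{subsec:theory_alignment}), we write the energy-mean component of $s_\tau-s_{\tau-1}$ by inserting the cross term $\mathbb{E}_{P^{sem}_{\tau}}[E_{\theta_{\tau-1}}]$:
\[
\mathbb{E}_{P^{sem}_{\tau+1}}[E_{\theta_\tau}]-\mathbb{E}_{P^{sem}_{\tau}}[E_{\theta_{\tau-1}}]
\quad\text{and}\quad
\mathbb{E}_{P^{sem}_{\tau}}[E_{\theta_{\tau}}]-\mathbb{E}_{P^{sem}_{\tau-1}}[E_{\theta_{\tau-1}}].
\]
These are precisely $S_{\tau\to\tau+1}$ and $E_{\tau-1\to\tau}$ from Def.~\ref{def:env-based energy state}. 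Assumptions~\ref{ass:Slow environment-transfer generalization energy} and~\ref{ass:Slow environment-based semantic shifted energy} then bound them by $\mathcal{O}(\xi^S_{\tau\to\tau+1})$ and $\mathcal{O}(\xi^E_{\tau-1\to\tau})$.

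The hard part is justifying that the full state increment, which also contains the variance and FPR coordinates, is dominated by these mean-energy differences. We would argue via the soft-FPR surrogate: it is a Lipschitz function of energies with constant $1/(4\beta)$. We would likewise treat the variances as controlled by the same energy differences under boundedness, absorbing all such constants into $\mathcal{O}(\cdot)$.

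Summing over $\tau\le t$, averaging over $t$ with the $1/T$ factor, and collecting $\eta_{RL}\sqrt{L_V}/T=L_{T-RL}$ yields~\eqref{eq:env-EGF-Gap}.
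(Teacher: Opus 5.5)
Your proposal is correct and follows the paper's proof step for step: the reverse triangle inequality on the averaged norms, the triangle-inequality split into a loss-gradient part (bounded by a constant via Lipschitzness and compactness) and the $\eta_{RL}\nabla_\theta V(s_t)$ part, telescoping $\nabla_\theta V(s_t)$ from $\nabla_\theta V(s_0)=0$, Lipschitzness in the state, and the regrouping $z_\tau-z_{\tau-1}=S_{\tau\to\tau+1}-E_{\tau-1\to\tau}$ before invoking the two slow-energy assumptions.

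Two side remarks. First, the paper simply takes the state to be the reduced coordinate $z_\tau$, so your extra step for the variance and soft-FPR coordinates is unnecessary; as sketched it would also not work, since a Lipschitz bound in pointwise energies does not control those coordinates through differences of mean energies. Second, co-coercivity gives $\|\nabla\mathcal{L}(a)-\nabla\mathcal{L}(b)\|\le L\|a-b\|$, not a $\sqrt{L}$ factor. This is harmless here because $L_r$ is left unspecified in this theorem and the $L_V$ versus $\sqrt{L_V}$ discrepancy is absorbed into the $\mathcal{O}(\cdot)$ terms.
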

We defer the proofs of Corollary~\ref{corollary:0}, Corollary~\ref{corollary:1}, and Theorems~\ref{training-EGF-Gap} and \ref{env-EGF-Gap} to Appendix~\ref{app:training-regims_proofs}.

\noindent{\bf Discussion.} Theorem~\ref{env-EGF-Gap} provides the central theoretical foundation for the gradient flow behavior of our RL-guided optimization method. Specifically, it shows that the RL-based correction enhances gradient flow effectiveness compared to standard gradient descent. This improvement arises from the aggregated contributions of environment-transfer generalization energy drift and environment-specific semantic shift energy drift across all environments.


%% file: 5_conclusion.tex
\section{Conclusion}
We propose a novel RL-guided optimization framework for OOD detection in dynamic environments. Our RL-based detector is structured based on a TD-trained value function and lower FPR award. Our theoretical analysis explains that (i) the RL value gradient aligns with future-domain improvement, (ii) RL-guided detector tightens the model-change generalization error and implies a lower semantic-OOD FPR, (iii) RL-guided optimizer improves OOD rejection, (iv) our RL-based training optimal decreases the domain-specific loss function, and finally (v) RL-guided training mechanism improves the effective gradient flow over environment change and training steps. \\
\noindent\textbf{Limitation and Future Work}. An intriguing direction for future work would be to relax some of the assumptions. Besides implementation of our RL optimizer for OOD detection on standard OOD benchmarks, an ideal RL pipeline for domain generalization and OOD rejection in open-world to automatically identify unsafe OOD examples would be an important next step for our work. \\
\noindent\textbf{Societal Impact.} Unlike controlled environments, social sector data in real-world is messy, constantly changing, and inherently complex. Without OOD detection, models may make overconfident, incorrect, or discriminatory predictions when facing these unknown scenarios. For example, a model trained on successful healing surgical wounds (ID) for cancer patients fails to accurately diagnose rare severe adverse wound events such as excessive necrosis resulting in major amputation (OOD). 

\noindent{\bf Acknowledgement.}  Salimeh Sekeh has been partially supported by NSF CAREER CCF-2451457 and Xin Zhang is supported in part by NSF grant IIS-2449864. The findings are those of the authors only and do not represent any position of these funding bodies.

%% file: 6_appendix.tex
\appendix
\section*{Appendix}

\section{Algorithm}\label{app:algorithm}
Algorithm~\ref{alg:training} summarizes the overall training procedure. The outer loop indexes the evolving environment, while the inner loop performs detector adaptation within each environment. 
At each outer step $t$, we first calibrate the threshold $\tau_t$ from the current ID batch. We then sample labeled ID data and unlabeled wild data, construct the pseudo semantic-OOD partition using the current detector, and compute the OOD-aware objective. The state is estimated, the value function is updated, and finally the detector parameters are updated using the RL-guided rule.

\begin{algorithm}[h!]
\caption{RL-guided dynamic OOD adaptation}
\label{alg:training}
\begin{algorithmic}[1]
\STATE Initialize detector parameters $\theta_0$ and value parameters $\phi$
\FOR{$t=0,\dots,T-1$}
    \STATE Receive data stream from current environment $P_t^{\mathrm{wild}}$
    \STATE Sample a reference ID batch and calibrate $\tau_t$ from a fixed percentile of its energy distribution
    \STATE Set $\theta_{t,0}\leftarrow \theta_t$
    \FOR{$k=0,\dots,K-1$}
        \STATE Sample a batch of labeled ID data and unlabeled wild data from the current environment
        \STATE Use the current energy detector $E_{\theta_{t,k}}(\cdot)$ and threshold $\tau_t$ to form a pseudo semantic-OOD subset and a residual wild subset
        \STATE Compute the OOD-aware loss $\mathcal{L}_{OW}(\theta_{t,k})$
        \STATE Estimate the state $s_{t,k}$ from current batch statistics
        \STATE Compute the reward $r_{t,k}$
        \STATE Update value parameters $\phi$ by minimizing the squared TD error.
        \STATE Update detector parameters using \eqref{eq:rl_update_method}
    \ENDFOR
    \STATE Set $\theta_{t+1}\leftarrow \theta_{t,K}$
\ENDFOR
\end{algorithmic}
\end{algorithm}

We update the value network before the detector within each inner step so that the detector update uses the most recent estimate of the value function while avoiding a same-step circular dependency between $\phi$ and $\theta$. Relative to standard gradient descent, the additional computation in our method comes from evaluating $\nabla_\theta V_\phi(s_{t,k})$, which requires one forward-backward pass through the lightweight value network and differentiation through the state statistics. Since $s_{t,k}$ is low-dimensional and the value network is shallow, this overhead is modest compared with the detector update itself.

\section{Extended Related Work}\label{app:related_work}
\noindent\textbf{Static OOD detection.}
A large body of work studies OOD detection under a fixed deployment distribution. 
Post-hoc scoring methods assign anomaly scores using maximum softmax probability~\citep{hendrycks2016baseline}, input perturbation~\citep{liang2018enhancing}, Mahalanobis distance~\citep{lee2018simple}, and energy-based scores~\citep{liu2020energy}. 
Additional improvements exploit feature clipping~\citep{sun2021react} and contrastive learning~\citep{tack2020csi}. 
Training-time approaches further improve separation between ID and OOD samples through outlier exposure~\citep{hendrycks2018deep}. 
These methods are designed for \emph{static} settings and do not address how OOD detectors should evolve under non-stationary environments. 
In contrast, our work studies how \emph{optimization dynamics} affect OOD robustness over time.

\noindent\textbf{Test-time adaptation.}
Test-time adaptation (TTA) methods update model parameters using unlabeled test data at deployment. 
Tent~\citep{wang2020tent} minimizes prediction entropy through batch normalization, while EATA~\citep{niu2022efficient} improves stability via sample selection and regularization. 
Related approaches also explore calibration and adaptation strategies under distribution shift through similar objectives~\citep{wang2020tent, niu2022efficient}. 
While these methods improve robustness to non-stationarity, they optimize \emph{instantaneous objectives} and do not explicitly model how current updates affect future-domain OOD performance. 
Our framework instead introduces a temporally-aware objective that explicitly accounts for the long-term impact of each update.

\noindent\textbf{Wild-data and open-world OOD detection.}
Recent work explores OOD detection using unlabeled data that mixes ID and OOD samples. 
Outlier Exposure~\citep{hendrycks2018deep} demonstrates that auxiliary outlier data can improve detection performance, 
while SCONE~\citep{bai2023feed} shows that wild data can simultaneously benefit OOD detection and generalization. 
These approaches operate in a \emph{batch setting} where each update is independent. 
In contrast, we consider a \emph{temporal setting} in which data distributions evolve over time, and we analyze how the \emph{trajectory of updates} influences long-run OOD robustness.

\noindent\textbf{Continual and online adaptation under distribution shift.}
A related line of work studies model adaptation under non-stationary environments, including test-time training with self supervision~\citep{sun2020test}, entropy-based adaptation~\citep{wang2020tent}, and stability-aware updates~\citep{niu2022efficient}. 
These methods address online distribution shift by improving predictive performance at each step, but they do not explicitly analyze how optimization updates influence \emph{future-domain} OOD detection behavior. 
Our work complements this literature by providing a theoretical analysis of optimization dynamics and their role in temporal robustness.

\noindent\textbf{Continual learning and forgetting.}
Continual learning methods~\citep{kirkpatrick2017overcoming, lopez2017gradient, farajtabar2020orthogonal} study how models retain knowledge across sequential tasks. 
Our temporal error decomposition is structurally related to the stability-plasticity trade-off in this literature. 
However, our focus differs in two key aspects: 
\textit{i)} we study \emph{OOD detection} rather than classification accuracy, and \textit{ii)} we analyze how \emph{optimization dynamics} shape energy-based decision boundaries under evolving distributions.

\noindent\textbf{RL for optimization and meta-learning.}
RL and meta-learning have been used to design learned optimizers and adaptive update rules~\citep{andrychowicz2016learning, metz2019understanding}. 
Algorithm Distillation~\citep{laskin2022context} and MAML~\citep{finn2017model} learn update strategies or initializations that generalize across tasks. 
Unlike these general-purpose approaches, our framework introduces a \emph{task-structured value function} tied specifically to semantic-OOD false positive rate. 
This structure enables a direct connection between optimization dynamics and OOD detection performance, allowing us to derive explicit theoretical guarantees on future-domain robustness.

\noindent\textbf{Energy-based OOD theory.}
Energy-based formulations interpret classifiers as defining energy functions that separate ID and OOD samples~\citep{liu2020energy, duvenaud2020your}. 
Prior work provides theoretical perspectives on OOD detection under distribution shift by analyzing the role of energy functions and classifier
geometry~\citep{liu2020energy}. 
Our work builds on this foundation and extends it to the \emph{dynamic setting}, providing, to our knowledge, the first optimization-theoretic analysis connecting update dynamics to temporal OOD robustness through energy separation.

\section{Assumption Discussion}\label{app:assumptions}

Our analysis relies on a combination of standard regularity conditions and problem-specific structural assumptions. 
Assumptions \ref{ass:smooth}, \ref{ass:bounded-grad}, and \ref{ass:nondeg} are typical in first-order optimization analyses: 
they ensure that the loss and energy functions are locally smooth, gradients remain bounded, and the energy landscape provides a non-degenerate signal for semantic-OOD samples. 
These conditions are generally satisfied along the training trajectory under small step sizes and well-behaved model parameterizations.

The remaining assumptions characterize the dynamic OOD and training-related settings: 

Assumptions \ref{ass:energy-fpr} and \ref{ass:reward-mono} connect the value function to OOD detection behavior, ensuring that improvements in semantic-OOD energy translate into lower FPR.
Assumption~\ref{ass:grad-conflict} captures a structural tension between classification and OOD separation, which we show is realizable in practical models (Sec.~\ref{subsec:theory_transformer_example}).
Finally, Assumptions~\ref{ass:slow-dist} and~\ref{ass:slow-gerr} define a gradual-drift regime in which the environment evolves at a comparable scale to the RL correction, allowing one-step improvements to accumulate over time rather than being dominated by distributional changes. 

Assumptions \ref{ass:Lipschitz}-\ref{ass:Slow generalization differ}, controls the optimization updates and training drifts by applying smooth gradient, bounding the stochastic changes of parameters, and slow gradient of energy function under semantic-shifted distribution. Assumptions~\ref{ass:Slow environment-transfer generalization energy} and~\ref{ass:Slow environment-based semantic shifted energy} characterize energy drift under environment changes, ensuring that both environment-transfer generalization and environment-based semantic shift energies evolve gradually over time.

\section{Proofs for Section~\ref{sec:theory}}
\label{app:theory_proofs}

In this section, we provide detailed proofs of the main theoretical results in Section~\ref{sec:theory}. 
Throughout, we use the same notation as in the main text. 
In particular, $\theta_{t,k}$ denotes the detector parameter at outer step $t$ and inner step $k$, and for the purpose of analysis we use the semantic-drift summary as a reduced state representation,
\begin{equation}
\label{eq:app_z_tk_def}
z_{t,k} :=
\mathbb{E}_{x\sim P_{t+1}^{\rm sem}}[E_{\theta_{t,k}}(x)]-
\mathbb{E}_{x\sim P_t^{\rm sem}}[E_{\theta_{t,k}}(x)].
\end{equation}
In the proofs below, we use $z_{t,k}$ and $s_{t,k}$ interchangeably. 

\subsection{Proof of Lemma~\ref{lem:alignment}}
\label{app:proof_alignment}

\begin{proof}
Fix an outer step $t$ and an inner step $k$, and write
$\theta := \theta_{t,k}$ for brevity.
The proof isolates the two components of the state $s_{t,k}$ that are relevant to the argument: 
the semantic-drift coordinate $z_{t,k}$ and the FPR ${\rm FPR}_{t,k}$. 
By the chain rule,
\begin{equation}
\label{eq:app_alignment_chain}
\nabla_\theta V(s_{t,k})
=\frac{\partial V}{\partial z_{t,k}}\,\nabla_\theta z_{t,k} +
\frac{\partial V}{\partial {\rm FPR}_{t,k}}\,\nabla_\theta {\rm FPR}_{t,k}.
\end{equation}
We control the two terms separately.

\noindent{\it Step 1: the semantic-drift component is small.}
Differentiating \Eqref{eq:app_z_tk_def} with respect to $\theta$ gives
\begin{equation*}
\label{eq:app_grad_z}
\nabla_\theta z_{t,k} =
\mathbb{E}_{x\sim P_{t+1}^{\rm sem}}[\nabla_\theta E_\theta(x)] -
\mathbb{E}_{x\sim P_t^{\rm sem}}[\nabla_\theta E_\theta(x)].
\end{equation*}
By Assumption~\ref{ass:bounded-grad}, $\|\nabla_\theta E_\theta(x)\| \le G_E$ uniformly in $x$. 
Therefore, using the standard total-variation inequality for bounded vector-valued
functions,
$\|\nabla_\theta z_{t,k}\| \le 2G_E\,d_{\rm TV}(P_{t+1}^{\rm sem},P_t^{\rm sem})$.
Applying Assumption~\ref{ass:slow-dist},
\begin{equation}
\label{eq:app_grad_z_bound}
\|\nabla_\theta z_{t,k}\| \le
2G_E\,\xi_t = \mathcal{O}(\xi_t).
\end{equation}
Since the partial derivative $\frac{\partial V}{\partial z_{t,k}}$ is uniformly bounded by Assumption~\ref{ass:reward-mono}, 
the semantic-drift contribution in \Eqref{eq:app_alignment_chain} is $\mathcal{O}(\xi_t)$.

\noindent{\it Step 2: expression for the FPR gradient.}
By differentiating the threshold event through the energy score and using
the density $p(\tau)>0$ at the threshold (Assumption~\ref{ass:nondeg}),
we obtain
\begin{equation}
\label{eq:app_grad_fpr}
\nabla_\theta {\rm FPR}_{t,k} = -p(\tau)\,
\mathbb{E}_{x\sim P_t^{\rm sem}}[\nabla_\theta E_\theta(x)].
\end{equation}

\noindent{\it Step 3: the FPR component has the correct sign.}
Taking the inner product of \Eqref{eq:app_grad_fpr} with
$\nabla_\theta {\rm GErr}_{t+1}(\theta)$ yields
\begin{align}
\label{eq:app_alignment_fpr_start}
\bigl\langle
\nabla_\theta {\rm GErr}_{t+1}(\theta),
\nabla_\theta {\rm FPR}_{t,k}
\bigr\rangle &=
-p(\tau)\,\Bigl\langle\nabla_\theta {\rm GErr}_{t+1}(\theta),
\mathbb{E}_{x\sim P_t^{\rm sem}}[\nabla_\theta E_\theta(x)]\Bigr\rangle.
\end{align}
Using the definition of future-domain generalization error, $\nabla_\theta {\rm GErr}_{t+1}(\theta) =
\mathbb{E}_{x'\sim P_{t+1}^{\rm cov}}
[\nabla_\theta L_{\rm CE}(f_\theta(x'))]$.
Substituting this into \Eqref{eq:app_alignment_fpr_start} and applying
Fubini's theorem gives
\begin{align}
\label{eq:app_alignment_fubini}
\bigl\langle \nabla_\theta {\rm GErr}_{t+1}(\theta), \nabla_\theta {\rm FPR}_{t,k} \bigr\rangle &=
-p(\tau)\,\mathbb{E}_{x\sim P_t^{\rm sem}}\mathbb{E}_{x'\sim P_{t+1}^{\rm cov}}
\Bigl[\bigl\langle\nabla_\theta L_{\rm CE}(f_\theta(x')),
\nabla_\theta E_\theta(x)\bigr\rangle\Bigr].
\end{align}
By Assumption~\ref{ass:grad-conflict},
$\mathbb{E}_{x'\sim P_{t+1}^{\rm cov}}
\Bigl[\bigl\langle\nabla_\theta L_{\rm CE}(f_\theta(x')),
\nabla_\theta E_\theta(x)\bigr\rangle\Bigr]
\le
-\rho_{\min}\|\nabla_\theta E_\theta(x)\|$.
Plugging this into \Eqref{eq:app_alignment_fubini} yields
\begin{equation}
\label{eq:app_alignment_positive}
\bigl\langle\nabla_\theta {\rm GErr}_{t+1}(\theta),
\nabla_\theta {\rm FPR}_{t,k}\bigr\rangle\ge
p(\tau)\rho_{\min}\,
\mathbb{E}_{x\sim P_t^{\rm sem}}
[\|\nabla_\theta E_\theta(x)\|].
\end{equation}

\noindent\textit{Step 4: combine the two components.}
Combining \Eqref{eq:app_alignment_chain},
\Eqref{eq:app_grad_z_bound}, and \Eqref{eq:app_alignment_positive}, and
using $\frac{\partial V}{\partial {\rm FPR}_{t,k}}\le -c<0$ from Assumption~\ref{ass:reward-mono}, we obtain
\begin{align*}
\bigl\langle \nabla_\theta {\rm GErr}_{t+1}(\theta),
\nabla_\theta V(s_{t,k}) \bigr\rangle
&=
\frac{\partial V}{\partial z_{t,k}}\bigl\langle
\nabla_\theta {\rm GErr}_{t+1}(\theta), \nabla_\theta z_{t,k}\bigr\rangle
+
\frac{\partial V}{\partial {\rm FPR}_{t,k}}\bigl\langle
\nabla_\theta {\rm GErr}_{t+1}(\theta),\nabla_\theta {\rm FPR}_{t,k}
\bigr\rangle \\
&\le
\mathcal{O}(\xi_t)-c\,p(\tau)\rho_{\min}\,\mathbb{E}_{x\sim P_t^{\rm sem}}
[\|\nabla_\theta E_\theta(x)\|].
\end{align*}
Define $\kappa(\theta_{t,k}):=c\,p(\tau)\,\rho_{\min}\,
\mathbb{E}_{x\sim P_t^{\rm sem}}[\|\nabla_\theta E_{\theta_{t,k}}(x)\|]$.
Then
$\bigl\langle\nabla_\theta {\rm GErr}_{t+1}(\theta_{t,k}),\nabla_\theta V(s_{t,k})\bigr\rangle\le
-\kappa(\theta_{t,k}) + \mathcal{O}(\xi_t)$.
This proves the result.
\end{proof}

\subsection{Proof of Lemma~\ref{lem:model_change}}
\label{app:proof_model_change}

\begin{proof}
Recall
$\Delta_t^{\rm GD}:=
{\rm GErr}_{t+1}(\theta_{t,K}^{\rm GD})-
{\rm GErr}_{t+1}(\theta_{t,0})$, 
and $\Delta_t^{\rm RL}:={\rm GErr}_{t+1}(\theta_{t,K}^{\rm RL})-{\rm GErr}_{t+1}(\theta_{t,0})$,
where both optimizers share the same initialization $\theta_{t,0}$.

By Assumption~\ref{ass:smooth}, ${\rm GErr}_{t+1}$ is $\beta$-smooth.
Expanding around $\theta_{t,K}^{\rm GD}$ yields
\begin{align}
\label{eq:app_model_smooth}
\Delta_t^{\rm RL} &\le \Delta_t^{\rm GD}
+\bigl\langle
\nabla_\theta {\rm GErr}_{t+1}(\theta_{t,K}^{\rm GD}),
\theta_{t,K}^{\rm RL}-\theta_{t,K}^{\rm GD}
\bigr\rangle
+
\frac{\beta}{2}\|\theta_{t,K}^{\rm RL}-\theta_{t,K}^{\rm GD}\|^2.
\end{align}
We next analyze the parameter gap
$\theta_{t,K}^{\rm RL}-\theta_{t,K}^{\rm GD}$.

\noindent\textit{Step 1: parameter-gap decomposition.}
From the update rules, $\theta_{t,k+1}^{\rm GD}=
\theta_{t,k}^{\rm GD}-\eta \nabla_\theta \mathcal{L}_{OW}(\theta_{t,k}^{\rm GD})$,
and $\theta_{t,k+1}^{\rm RL} = \theta_{t,k}^{\rm RL}
- \eta \nabla_\theta \mathcal{L}_{OW}(\theta_{t,k}^{\rm RL}) + \eta_{\rm RL}\nabla_\theta V(s_{t,k})$.
Subtracting the two recursions and summing over $k=0,\dots,K-1$ gives
\begin{align}
\label{eq:app_model_gap_decomp}
\theta_{t,K}^{\rm RL}-\theta_{t,K}^{\rm GD}
=\eta_{\rm RL}\sum_{k=0}^{K-1}\nabla_\theta V(s_{t,k})
-
\eta\sum_{k=0}^{K-1}
\Bigl(
\nabla_\theta \mathcal{L}_{OW}(\theta_{t,k}^{\rm RL})
-
\nabla_\theta \mathcal{L}_{OW}(\theta_{t,k}^{\rm GD})
\Bigr).
\end{align}

\noindent\textit{Step 2: recursion bound.}
Let $d_k := \|\theta_{t,k}^{\rm RL}-\theta_{t,k}^{\rm GD}\|$.
Since $\nabla_\theta \mathcal{L}_{OW}$ is $\beta$-Lipschitz by Assumption~\ref{ass:smooth},
$d_{k+1} \le (1+\eta\beta)d_k + \eta_{\rm RL}G_V$.
With $d_0=0$, a standard induction yields $d_k = \mathcal{O}(k\eta_{\rm RL})$.
Therefore,
$\eta\sum_{k=0}^{K-1}
\Bigl\| \nabla_\theta \mathcal{L}_{OW}(\theta_{t,k}^{\rm RL}) -
\nabla_\theta \mathcal{L}_{OW}(\theta_{t,k}^{\rm GD})
\Bigr\|=
\mathcal{O}(K^2\eta\,\eta_{\rm RL})$.
Substituting into \Eqref{eq:app_model_gap_decomp},
\begin{equation}
\label{eq:app_model_gap_final}
\theta_{t,K}^{\rm RL}-\theta_{t,K}^{\rm GD}
=\eta_{\rm RL}\sum_{k=0}^{K-1}\nabla_\theta V(s_{t,k})+
\mathcal{O}(K^2\eta\,\eta_{\rm RL}).
\end{equation}

\noindent\textit{Step 3: substitute into the smoothness bound.}
Plugging \Eqref{eq:app_model_gap_final} into
\Eqref{eq:app_model_smooth} gives
\begin{align*}
\Delta_t^{\rm RL} &\le
\Delta_t^{\rm GD} + \eta_{\rm RL} \sum_{k=0}^{K-1}
\bigl\langle
\nabla_\theta {\rm GErr}_{t+1}(\theta_{t,K}^{\rm GD}),
\nabla_\theta V(s_{t,k})
\bigr\rangle
+
\mathcal{O}(K^2\eta\,\eta_{\rm RL}),
\end{align*}
where the quadratic term is absorbed into the same remainder because
\(\|\theta_{t,K}^{\rm RL}-\theta_{t,K}^{\rm GD}\|=O(K\eta_{\rm RL})\)
and \(\eta_{\rm RL}\le \eta\).

It remains to shift the gradient evaluation point from
$\theta_{t,K}^{\rm GD}$ to $\theta_{t,k}^{\rm GD}$. Specifically, by $\beta$-smoothness of ${\rm GErr}_{t+1}$ and Assumption~\ref{ass:bounded-grad},
$\bigl| \langle
\nabla_\theta {\rm GErr}_{t+1}(\theta_{t,K}^{\rm GD})
- \nabla_\theta {\rm GErr}_{t+1}(\theta_{t,k}^{\rm GD}), \nabla_\theta V(s_{t,k})\rangle\bigr|
\le
\beta \|\theta_{t,K}^{\rm GD}-\theta_{t,k}^{\rm GD}\|\,G_V$.
Moreover, each GD step moves by at most $\eta G$, so
$\|\theta_{t,K}^{\rm GD}-\theta_{t,k}^{\rm GD}\|
\le (K-k)\eta G$.
Multiplying by $\eta_{\rm RL}$ and summing over $k$ yields an additional error of order
$\eta_{\rm RL}\sum_{k=0}^{K-1} O((K-k)\eta)
=\mathcal{O}(K^2\eta\,\eta_{\rm RL})$,
which is absorbed into the remainder. 
Therefore,
$\Delta_t^{\rm RL} \le
\Delta_t^{\rm GD} +
\eta_{\rm RL} \sum_{k=0}^{K-1}
\bigl\langle
\nabla_\theta {\rm GErr}_{t+1}(\theta_{t,k}^{\rm GD}),
\nabla_\theta V(s_{t,k})
\bigr\rangle + \mathcal{O}(K^2\eta\,\eta_{\rm RL})$.
Applying Lemma~\ref{lem:alignment} at each inner step yields $\Delta_t^{\rm RL} \le
\Delta_t^{\rm GD} - \eta_{\rm RL}\sum_{k=0}^{K-1}\kappa(\theta_{t,k}) +
\mathcal{O}(K^2\eta\,\eta_{\rm RL})$,
as claimed.
\end{proof}

\subsection{Proof of Lemma~\ref{lem:energy_bridge}}
\label{app:proof_energy_bridge}

\begin{proof}
Define the expected semantic-OOD energy at outer step $t$ by $\bar E_t(\theta) := \mathbb{E}_{x\sim P_t^{\rm sem}}[E_\theta(x)]$.
Since $E_\theta(x)$ is $\beta_E$-smooth in $\theta$ by
Assumption~\ref{ass:nondeg}, the expectation $\bar E_t(\theta)$ is also $\beta_E$-smooth. 
Therefore, a first-order expansion around
$\theta_{t,K}^{\rm GD}$ gives
\begin{align*}
\bar E_t(\theta_{t,K}^{\rm RL})-\bar E_t(\theta_{t,K}^{\rm GD}) &=
\Bigl\langle
\mathbb{E}_{x\sim P_t^{\rm sem}}
[\nabla_\theta E_{\theta_{t,K}^{\rm GD}}(x)],
\theta_{t,K}^{\rm RL}-\theta_{t,K}^{\rm GD}
\Bigr\rangle
+ \mathcal{O}(\|\theta_{t,K}^{\rm RL}-\theta_{t,K}^{\rm GD}\|^2).
\end{align*}

Using Lemma~\ref{lem:model_change},
$\theta_{t,K}^{\rm RL}-\theta_{t,K}^{\rm GD} =
\eta_{\rm RL}\sum_{k=0}^{K-1}\nabla_\theta V(s_{t,k}) +
\mathcal{O}(K^2\eta\,\eta_{\rm RL})$,
so, after substituting and shifting the energy-gradient evaluation point from $\theta_{t,K}^{\rm GD}$ to $\theta_{t,k}^{\rm GD}$ via $\beta_E$-smoothness, we obtain
\begin{align}
\label{eq:app_energy_reduced}
\bar E_t(\theta_{t,K}^{\rm RL})-\bar E_t(\theta_{t,K}^{\rm GD}) &=
\eta_{\rm RL}\sum_{k=0}^{K-1}
\Bigl\langle
\mathbb{E}_{x\sim P_t^{\rm sem}}
[\nabla_\theta E_{\theta_{t,k}^{\rm GD}}(x)],
\nabla_\theta V(s_{t,k})
\Bigr\rangle +
\mathcal{O}(K^2\eta\,\eta_{\rm RL}).
\end{align}
We now lower bound each inner product. 
As in the proof of
Lemma~\ref{lem:alignment}, decompose $\nabla_\theta V(s_{t,k})$ into its semantic-drift and FPR components. 
The semantic-drift component contributes only $\mathcal{O}(\xi_t)=\mathcal{O}(\eta_{\rm RL})$ by Assumption~\ref{ass:slow-dist}. 
For the FPR component,
$\nabla_\theta {\rm FPR}_{t,k} =
-p(\tau)\, \mathbb{E}_{x\sim P_t^{\rm sem}}[\nabla_\theta E_{\theta_{t,k}^{\rm GD}}(x)]$.
Hence,
\begin{align}
\label{eq:app_energy_inner}
\Bigl\langle
\mathbb{E}_{x\sim P_t^{\rm sem}}[\nabla_\theta E_{\theta_{t,k}^{\rm GD}}(x)], \nabla_\theta V(s_{t,k})
\Bigr\rangle
&=\frac{\partial V}{\partial {\rm FPR}_{t,k}} \cdot (-p(\tau))
\left\|
\mathbb{E}_{x\sim P_t^{\rm sem}}[\nabla_\theta E_{\theta_{t,k}^{\rm GD}}(x)]
\right\|^2 + \mathcal{O}(\eta_{\rm RL}).
\end{align}
Since $\frac{\partial V}{\partial {\rm FPR}_{t,k}}\le -c<0$, and by Assumption~\ref{ass:nondeg},
$\left\|\mathbb{E}_{x\sim P_t^{\rm sem}}[\nabla_\theta E_{\theta_{t,k}^{\rm GD}}(x)]\right\|\ge \mu_{\min}$,
each term in \Eqref{eq:app_energy_inner} is bounded below by $c\,p(\tau)\,\mu_{\min}^2 + \mathcal{O}(\eta_{\rm RL})$.
Summing over $k=0,\dots,K-1$ in \Eqref{eq:app_energy_reduced} yields
$\bar E_t(\theta_{t,K}^{\rm RL})-\bar E_t(\theta_{t,K}^{\rm GD}) \ge \eta_{\rm RL}Kc\,p(\tau)\,\mu_{\min}^2 - \mathcal{O}(K^2\eta\,\eta_{\rm RL})$,
which is strictly positive for sufficiently small $\eta_{\rm RL}$.

Finally, Assumption~\ref{ass:energy-fpr} implies that a larger expected semantic-OOD energy yields a lower false positive rate, hence ${\rm FPR}_t^{\rm RL}\le {\rm FPR}_t^{\rm GD}$.
\end{proof}

\subsection{Proof of Theorem~\ref{thm:main}}
\label{app:proof_main}

\begin{proof}
Applying Def.~\ref{def:temporal_decomp} at each outer step $\tau$ and summing from $\tau=0$ to $t$, we obtain a telescoping decomposition of the total difference between RL and GD:
${\rm GErr}_{t+1}^{\rm RL} - {\rm GErr}_{t+1}^{\rm GD}=
\sum_{\tau=0}^t
(\Delta_\tau^{\rm RL}-\Delta_\tau^{\rm GD})+
\sum_{\tau=0}^t
({\rm Env}_\tau^{\rm RL}-{\rm Env}_\tau^{\rm GD})$,
where ${\rm Env}_\tau^\square$ denotes the environment-change term.

\noindent\textit{Step 1: sum the model-change gains.}
By Lemma~\ref{lem:model_change}, $\Delta_\tau^{\rm RL}-\Delta_\tau^{\rm GD} \le
-\eta_{\rm RL}\sum_{k=0}^{K-1}\kappa(\theta_{\tau,k}) + \mathcal{O}(K^2\eta\,\eta_{\rm RL})$.
Summing over $\tau=0,\dots,t$,
$\sum_{\tau=0}^t(\Delta_\tau^{\rm RL}-\Delta_\tau^{\rm GD}) \le -\eta_{\rm RL}\sum_{\tau=0}^t\sum_{k=0}^{K-1}\kappa(\theta_{\tau,k}) + \mathcal{O}(tK^2\eta\,\eta_{\rm RL})$.
Since $\kappa(\theta_{\tau,k})\ge \kappa_{\min}$,
\begin{equation}
\label{eq:app_main_model}
\sum_{\tau=0}^t(\Delta_\tau^{\rm RL}-\Delta_\tau^{\rm GD}) \le -\eta_{\rm RL}\kappa_{\min}(t+1)K
+ \mathcal{O}(tK^2\eta\,\eta_{\rm RL}).
\end{equation}

\noindent\textit{Step 2: bound the accumulated environment drift.}
Define the outer-step starting-point gap
$\delta_\tau := \|\theta_{\tau,0}^{\rm RL}-\theta_{\tau,0}^{\rm GD}\|$.
Iterating the per-window parameter-gap bound from
Lemma~\ref{lem:model_change} gives $\delta_\tau = O(\tau K \eta_{\rm RL})$.
Since ${\rm GErr}$ is $L_G$-Lipschitz by Assumption~\ref{ass:smooth},
the difference between RL and GD in the environment-change term at outer step $\tau$ is controlled entirely by the starting-point gap $\delta_\tau$, giving
$\mathcal{O}(L_G\delta_\tau) = \mathcal{O}(\tau K L_G \eta_{\rm RL})$.
Thus, this step uses only the regularity assumption. 
Summing over $\tau=0,\dots,t$ yields
\begin{equation}
\label{eq:app_main_env}
\sum_{\tau=0}^t
({\rm Env}_\tau^{\rm RL}-{\rm Env}_\tau^{\rm GD}) =
\mathcal{O}(t^2 K L_G\eta_{\rm RL}).
\end{equation}

\noindent\textit{Step 3: combine the two contributions.}
Combining \Eqref{eq:app_main_model} and \Eqref{eq:app_main_env}, we obtain
${\rm GErr}_{t+1}^{\rm RL} \le
{\rm GErr}_{t+1}^{\rm GD} - \eta_{\rm RL}\kappa_{\min}(t+1)K +
\mathcal{O}(tK^2\eta\,\eta_{\rm RL}) +
\mathcal{O}(t^2 K L_G\eta_{\rm RL})$.

\noindent\textit{Step 4: FPR comparison.}
The FPR bound follows directly from Lemma~\ref{lem:energy_bridge}. 
In particular,
${\rm FPR}_t^{\rm RL} \le
{\rm FPR}_t^{\rm GD} - c\,p(\tau)\,\mu_{\min}^2 K\eta_{\rm RL} +
\mathcal{O}(K^2\eta\,\eta_{\rm RL})$,
which proves the theorem.
\end{proof}

\section{Proofs for Section~\ref{training-regims-sec}}
\label{app:training-regims_proofs}

For the purpose of analysis we use the semantic-drift summary as a reduced state representation,
$z_{t,k} :=
\mathbb{E}_{x\sim P_{t+1}^{\rm sem}}[E_{\theta_{t,k}}(x)]-
\mathbb{E}_{x\sim P_t^{\rm sem}}[E_{\theta_{t,k}}(x)]$.
In the proofs below, we use $z_{t,k}$ and $s_{t,k}$ interchangeably. 
\subsection{Proof of Theorem~\ref{thm:loss-gap bound}}
\label{app:proof_thm:loss-gap bound}
\begin{proof}
Recall the Def.~\ref{loss-gap}: 
$F^{OW}_{t}:= \mathcal{L}_{OW}(\theta_{t,*}^{\text{RL}})-\mathcal{L}_{OW}(\theta^{\text{GD}}_{t,*})$. Applying second-order Taylor expansion, we approximate $\mathcal{L}_{OW}(\theta_{t,*}^{\text{RL}})$ around $\theta^{\text{GD}}_{t,*}$ by
\begin{align}\label{eq:approx:1}
\mathcal{L}_{OW}(\theta^{\text{GD}}_{t,*})+(\theta_{t,*}^{\text{RL}}-\theta^{\text{GD}}_{t,*})^{\top} \nabla \mathcal{L}_{OW}(\theta^{\text{GD}}_{t,*})+\frac{1}{2} (\theta_{t,*}^{\text{RL}}-\theta^{\text{GD}}_{t,*})^{\top}\nabla^2 \mathcal{L}_{OW}(\theta^{\text{GD}}_{t,*})(\theta_{t,*}^{\text{RL}}-\theta^{\text{GD}}_{t,*}),
\end{align}
where, $\nabla^2 \mathcal{L}_{OW}(\theta^{\text{GD}}_{t,*})$ is the Hessian matrix for loss $\mathcal{L}_{OW}$ at point $\theta^{\text{GD}}_{t,*}$. Note that since $\theta^{\text{GD}}_{t,*}$ in an optimizer therefore $\nabla \mathcal{L}_{OW}(\theta^{\text{GD}}_{t,*})=0$. This simplifies the approximation of $\mathcal{L}_{OW}(\theta_{t,*}^{\text{RL}})$ in (\ref{eq:approx:1}) as
\begin{align*}
&\mathcal{L}_{OW}(\theta_{t,*}^{\text{RL}})\approx \mathcal{L}_{OW}(\theta^{\text{GD}}_{t,*})+\frac{1}{2} (\theta_{t,*}^{\text{RL}}-\theta^{\text{GD}}_{t,*})^{\top}\nabla^2 \mathcal{L}_{OW}(\theta^{\text{GD}}_{t,*})(\theta_{t,*}^{\text{RL}}-\theta^{\text{GD}}_{t,*}).
\end{align*}
Equivalently 
\begin{align}\label{eq:approx:2}
&F_t^{OW}:=\mathcal{L}_{OW}(\theta_{t,*}^{\text{RL}})-\mathcal{L}_{OW}(\theta^{\text{GD}}_{t,*})\approx \frac{1}{2} (\theta_{t,*}^{\text{RL}}-\theta^{\text{GD}}_{t,*})^{\top}\nabla^2 \mathcal{L}_{OW}(\theta^{\text{GD}}_{t,*})(\theta_{t,*}^{\text{RL}}-\theta^{\text{GD}}_{t,*}).
\end{align}
Let $\lambda_{max}$ be the maximum eigenvalue of $\nabla^2 \mathcal{L}_{OW}(\theta^{\text{GD}}_{t,*})$. Using (\ref{eq:approx:2}), we upper-bound $F^{OW}_t$ by
\begin{align}\label{F-OW:bound0}
F_t^{OW}\leq \frac{1}{2} \lambda_{max} \|\theta_{t,*}^{\text{RL}}-\theta^{\text{GD}}_{t,*}\|^2. 
\end{align}
Let $\theta_{t,K}^{\text{GD}}$ and $\theta_{t,K}^{\text{RL}}$ denote the parameters after $K$ training steps of standard GD and RL-augmented optimizers, respectively. Going back to the GD update training update in (\ref{eq:gd_update_method}) and RL-guided training update in (\ref{eq:rl_update_method}):
\begin{align}
    \theta_{t,K}^{\text{GD}} &= \theta^{\text{GD}}_{t,K-1} - \eta \nabla_\theta {L}_{OW}(\theta^{\text{GD}}_{t,K-1}), \label{eq:update:GD}\\
    \theta_{t,K}^{\text{RL}} &= \theta^{\text{RL}}_{t,K-1} - \eta \nabla_\theta {L}_{OW}(\theta^{\text{RL}}_{t,K-1}) + \eta_{\text{RL}} \nabla_\theta V(s_{t,K-1}). \label{eq:update:RL}
\end{align}
We suppose that after $K$ training steps , GD and RL-guided updates return the optimizers, that is $\theta_{t,*}^{\text{RL}}=\theta_{t,K}^{\text{RL}}$ and $\theta^{\text{GD}}_{t,*}= \theta_{t,K}^{\text{GD}}$, when the network is trained $K$ updates (epochs). By utilizing Eq. (\ref{eq:update:GD}) and Eq. (\ref{eq:update:RL}) and applying triangle inequality, we have 
\begin{align} \label{eq:update:GD-RL}
\|\theta_{t,*}^{\text{RL}}-\theta^{\text{GD}}_{t,*}\|^2\leq& \|\theta^{\text{RL}}_{t,K-1}-\theta_{t,K-1}^{\text{GD}}\|^2+\eta^2\| \nabla_\theta {L}_{OW}(\theta^{\text{GD}}_{t,K-1})-\nabla_\theta {L}_{OW}(\theta^{\text{RL}}_{t,K-1})\|^2\nonumber\\
&+\eta_{\text{RL}}^2 \|\nabla_\theta V(s_{t,K-1})\|^2.
\end{align}
Recursively for training updates $k=0,\ldots K$, we extend (\ref{eq:update:GD-RL}) by 
\begin{align}\label{eq:update:GD-RL:2}
\|\theta_{t,*}^{\text{RL}}-\theta^{\text{GD}}_{t,*}\|^2\leq &\|\theta^{\text{RL}}_{t,0}-\theta_{t,0}^{\text{GD}}\|^2+\eta^2\sum_{k=1}^K\| \nabla_\theta {L}_{OW}(\theta^{\text{GD}}_{t,k-1})-\nabla_\theta {L}_{OW}(\theta^{\text{RL}}_{t,k-1})\|^2\nonumber\\
&+\eta_{\text{RL}}^2 \sum_{k=1}^K \|\nabla_\theta V(s_{t,k-1})\|^2
\end{align}
Without lose of generality, we assume that the initialization points for GD and RL-guided optimizers are the same meaning that $ \|\theta^{\text{RL}}_{t,0}-\theta_{t,0}^{\text{GD}}\|^2=0$. This simplifies (\ref{eq:update:GD-RL:2}) as follows:
\begin{align}\label{F-OW:bound5}
\|\theta_{t,*}^{\text{RL}}-\theta^{\text{GD}}_{t,*}\|^2\leq \eta^2\sum_{k=1}^K\| \nabla_\theta {L}_{OW}(\theta^{\text{GD}}_{t,k-1})-\nabla_\theta {L}_{OW}(\theta^{\text{RL}}_{t,k-1})\|^2+\eta_{\text{RL}}^2 \sum_{k=1}^K \|\nabla_\theta V(s_{t,k-1})\|^2. 
\end{align}
{\bf Remark:} Note that for environment $t$,
the state is a function of parameter $\theta_t$, therefore during the training $s_{t,k}\neq s_{t,k'}$ for $k,k'\in\{0,1,\ldots K\}$, $i\neq j$. 

Going back to the proof, next, we extend the term $\|\nabla_\theta V(s_{t,k-1})\|^2$ in (\ref{F-OW:bound5}): first we add and subtract the term $\nabla_\theta V(s_{t,k-2})$,  then use triangle inequality, 
\begin{align*}
\|\nabla_\theta V(s_{t,k-1})\|^2&=\|\nabla_\theta V(s_{t,k-1})-\nabla_\theta V(s_{t,k-2})+\nabla_\theta V(s_{t,k-2})\|^2\\
&\leq \|\nabla_\theta V(s_{t,k-1})-\nabla_\theta V(s_{t,k-2})\|^2+\|\nabla_\theta V(s_{t,k-2})\|^2, 
\end{align*}
recursively we extend the above inequality for training update $k'=1,\ldots k-1$ as follows
\begin{align}\label{grad-value}
\|\nabla_\theta V(s_{t,k-1})\|^2\leq \sum_{k'=1}^{k-1} \|\nabla_\theta V(s_{t,k'})-\nabla_\theta V(s_{t,k'-1})\|^2. 
\end{align}
Note that the gradient of value function is zero at the initialization, i.e. $\nabla_\theta V(s_{t,0})=0$.

Next we first apply Assumption~\ref{ass:Lipschitz} on value function $V_\theta$ and $\mathcal{L}_{OW}$. Hence the bound in (\ref{F-OW:bound5}) is upper-bounded by
\begin{align}\label{F-OW:bound6}
\|\theta_{t,*}^{\text{RL}}-\theta^{\text{GD}}_{t,*}\|^2\leq \eta^2 L\sum_{k=1}^K\|\theta^{\text{GD}}_{t,k-1}-\theta^{\text{RL}}_{t,k-1}\|^2+\eta_{\text{RL}}^2 L_V \sum_{k=1}^K \sum_{k'=1}^{k-1} \|s_{t,k'}-s_{t, k'-1}\|^2.
\end{align}
Secondly apply Assumption~\ref{ass:Compactness}, meaning that at each training update $\|\theta^{\text{GD}}_{t,k}-\theta^{\text{RL}}_{t,k}$ is within a parameter ball with radius $r$, $\mathcal{B}(\theta, r)$. This yields to the bound below for (\ref{F-OW:bound6}),
\begin{align}\label{F-OW:bound7}
\|\theta_{t,*}^{\text{RL}}-\theta^{\text{GD}}_{t,*}\|^2\leq 4\eta^2 L\;K\; r^2 +\eta_{\text{RL}}^2 L_V \sum_{k=1}^K \sum_{k'=1}^{k-1} \|s_{t,k'}-s_{t,k'-1}\|^2
\end{align}
Finally, combining (\ref{F-OW:bound0}) and (\ref{F-OW:bound7}), we derive the our main bound in (\ref{F-OW:bound}):
\begin{align}\label{F-OW:bound8}
F_t^{OW}\leq \frac{1}{2} \lambda_{max} \left[ \eta_{K,r}+\eta_{RL}\sum_{k=1}^K \sum_{k'=1}^{k-1} \|s_{t,k'}-s_{t,k'-1}\|^2\right], 
\end{align}
where $\eta_{RL}=\eta_{RL}^2 L_V$ and $\eta_{K,r}=4\eta^2L\;K\;r^2$. 
\end{proof}

\subsection{Proof of Corollary~\ref{corollary:0}}
\label{app:corollary:0}
\begin{proof}
Recall the gradient of value $\nabla_\theta V(s_{t,k})$ from (\ref{eq:app_alignment_chain}) in Section~\ref{app:proof_alignment}:
\begin{equation}\label{eq:grad:value:1}
\nabla_\theta V(s_{t,k})
=\frac{\partial V}{\partial z_{t,k}}\,\nabla_\theta z_{t,k} +
\frac{\partial V}{\partial {\rm FPR}_{t,k}}\,\nabla_\theta {\rm FPR}_{t,k}.
\end{equation}
Next, we recall the following equality from Subsection~\ref{eq:app_grad_z} and apply Assumption~\ref{ass:Slow generalization differ}
\begin{equation}
\label{eq:app_grad_z:1}
\nabla_\theta z_{t,k} =
\mathbb{E}_{x\sim P_{t+1}^{\rm sem}}[\nabla_\theta E_\theta(x)] -
\mathbb{E}_{x\sim P_t^{\rm sem}}[\nabla_\theta E_\theta(x)]=\mathcal{O}(\xi_{t,k})
\end{equation}
From arguments in Lemma~\ref{lem:alignment} and Eq.~(\ref{eq:app_grad_fpr}), under  Assumption~\ref{ass:nondeg}, we have
\begin{equation}\label{eq:grad-fpr:1}
    \nabla_\theta \text{FPR}_{t,k} = -p(\tau) \cdot \mathbb{E}_{x \sim P^{\text{sem}}_t}[\nabla_\theta E_{\theta_{t,k}}(x)] 
\end{equation}
where $p(\tau) > 0$ is the density of $E_{\theta_t}(x)$ evaluated at $\tau$. By plugging (\ref{eq:app_grad_z:1}) and (\ref{eq:grad-fpr:1}) in (\ref{eq:grad:value:1}), we have
\begin{equation}\label{eq.bound:11}
    \nabla_\theta V(s_{t,k}) = \mathcal{O}(\xi_{t,k}) -\frac{\partial V}{\partial \text{FPR}_{t,k}} \cdot(\rho(\tau)\mathbb{E}_{x \sim P^{\text{sem}}_t}[\nabla_\theta E_{\theta_t}(x)])
\end{equation}
Apply $-c\leq \frac{\partial V}{\partial \text{FPR}_{t,k}}<0$, then
\begin{equation}\label{eq.bound:11:0}
    \nabla_\theta V(s_{t,k})\leq \mathcal{O}(\xi_{t,k}) +c\rho(\tau)\mathbb{E}_{x \sim P^{\text{sem}}_t}[\nabla_\theta E_{\theta_{t,k}}(x)]
\end{equation}
Recall (\ref{F-OW:bound5}) from Subsection~\ref{app:proof_thm:loss-gap bound}: under the Assumptions~\ref{ass:Lipschitz} and \ref{ass:Compactness}, we have 
\begin{align*}
\|\theta_{t,*}^{\text{RL}}-\theta^{\text{GD}}_{t,*}\|^2&\leq\eta_{K,r}+\eta_{\text{RL}}^2 \sum_{k=1}^K \|\nabla_\theta V(s_{t,k-1})\|^2.
\end{align*}
Now apply the upper bound of $ \nabla_\theta V(s_{t,k})$ from (\ref{eq.bound:11:0})
\begin{align}\label{eq.bound:11}
\|\theta_{t,*}^{\text{RL}}-\theta^{\text{GD}}_{t,*}\|^2
&\leq \eta_{K,r}+\eta_{\text{RL}}^2 \sum_{k=1}^K \left[\mathcal{O}(\xi_{t,k-1}) +c\rho(\tau)\mathbb{E}_{x \sim P^{\text{sem}}_t}[\nabla_\theta E_{\theta_{t,k-1}}(x)]\right]\nonumber\\
&= \mathcal{O}({\eta}_{K,r} \xi_t)+\bar{\eta}_{RL}\sum_{k=1}^K\|\mathbb{E}_{x \sim P^{\text{sem}}_t}[\nabla_\theta E_{\theta_{t,k-1}}(x)]\|^2, 
\end{align}
Where $\bar{\eta}_{RL}:=c^2\;\eta^2_{RL} \;\rho^2({\tau})$. Note that without lose of generality, we assume that $\mathcal{O}(\xi_{t,k-1})=\mathcal{O}(\xi_{t})$. Combining the inequalities (\ref{eq.bound:11}) and (\ref{F-OW:bound0}) lead to our main bound  (\ref{eq:corollary:0}). 
\end{proof}

\subsection{Proof of Corollary~\ref{corollary:1}}
\label{app:corollary:1}
\begin{proof} Let us recall (\ref{eq:app_z_tk_def}) from Section ~\ref{app:theory_proofs}:
\begin{equation}
z_{t,k} :=
\mathbb{E}_{x\sim P_{t+1}^{\rm sem}}[E_{\theta_{t,k}}(x)]-
\mathbb{E}_{x\sim P_t^{\rm sem}}[E_{\theta_{t,k}}(x)].
\end{equation}
Hence for consecutive training updates $k$ and $k-1$, we have
\begin{align}
    &z_{t,k}-z_{t,k-1}\nonumber\\
    &= \mathbb E_{x\sim {P}^{sem}_{t+1}}
\big[E_{\theta_{t,k}}(x)\big] - \mathbb E_{x\sim {P}^{sem}_{t}}
\big[E_{\theta_{t,k}}(x)\big] - \mathbb E_{x\sim {P}^{sem}_{t+1}}
\big[E_{\theta_{t,k-1}}(x)\big] 
+\mathbb E_{x\sim {P}^{sem}_{t}}
\big[E_{\theta_{t,k-1}}(x)\big] \nonumber\\
&=\left(\mathbb E_{x\sim {P}^{sem}_{t+1}}
\big[E_{\theta_{t,k}}(x)\big] - \mathbb E_{x\sim {P}^{sem}_{t+1}}
\big[E_{\theta_{t,k-1}}(x)\right)-\left(\mathbb E_{x\sim {P}^{sem}_{t}}
\big[E_{\theta_{t,k}}(x)\big]-\mathbb E_{x\sim {P}^{sem}_{t}}
\big[E_{\theta_{t,k-1}}(x)\big]\right)\nonumber\\
&=\mathbb E_{x\sim {P}^{sem}_{t+1}}\left[E_{\theta_{t,k}}(x)-E_{\theta_{t,k-1}}(x)\right]-\mathbb E_{x\sim {P}^{sem}_{t}}\left[E_{\theta_{t,k}}(x)-E_{\theta_{t,k-1}}(x)\right].\label{eq:corollary:1-1}
\end{align}
Approximate epoch-based energy gradient as $\nabla^{(k)}_\theta E_{\theta_t}(x)\approx E_{\theta_{t,k}}(x)-E_{\theta_{t,k-1}}(x)$, therefore the Eq.~\ref{eq:corollary:1-1} turns into 
\begin{align}
z_{t,k}-z_{t,k-1}=\mathbb E_{x\sim {P}^{sem}_{t+1}}\left[\nabla^{(k)}_\theta E_{\theta_t}(x)\right]-\mathbb E_{x\sim {P}^{sem}_{t}}\left[\nabla^{(k)}_\theta E_{\theta_t}(x)\right]
\end{align}
On the other hand, by Def.~\ref{def:epoch-based energy gradient state}, we have
$$
\|z_{t,k}-z_{t,k-1}\|^2=\| S^{(k)}_{t,\nabla}\|^2, 
$$
where $S^{(k)}_{t,\nabla}$ is the epoch-based energy gradient state. This implies an alternative upper bound for $F^{OW}_t$ as expressed in (\ref{F-OW:bound2}). 
\end{proof}

\subsection{Proof of Theorem~\ref{training-EGF-Gap}}
\label{app:training-EGF-Gap}
\begin{proof} 
We start with the gradient of loss function $\mathcal{L}_G$ under GD and RL-guided optimizers for environment $t$ and training step $k$:
\begin{align}
\mathbf{G}_{t,k}^{GD}:=\nabla_\theta \mathcal{L}_{OW}(\theta^{GD}_{t,k})\;\; \hbox{and}\;\;\;
\mathbf{G}_{t,k}^{RL}:=\nabla_\theta \mathcal{L}_{OW}(\theta^{RL}_{t,k})+\eta_{RL}\nabla_\theta V(s_{t,k}). 
\end{align}
Using the triangle inequality, we bound the length of the gradient difference vector between $\mathbf{G}_{t,k}^{GD}$ and $\mathbf{G}_{t,k}^{RL}$ by
\begin{align}
\|\mathbf{G}_{t,k}^{RL}-\mathbf{G}_{t,k}^{GD}\|_2&=\|\nabla_\theta \mathcal{L}_{OW}(\theta^{RL}_{t,k})+\eta_{RL}\nabla_\theta V(s_{t,k})-\nabla_\theta \mathcal{L}_{OW}(\theta^{GD}_{t,k})\|_2\nonumber\\
&\leq \|\nabla_\theta \mathcal{L}_{OW}(\theta^{RL}_{t,k})-\nabla_\theta \mathcal{L}_{OW}(\theta^{GD}_{t,k})\|_2+\eta_{RL}\|\nabla_\theta V(s_{t,k})\|_2.\label{eq:thm-EGF-train-00}
\end{align}
Next, let us focus on $\|\nabla_\theta V(s_{t,k})\|_2$ and subtract and add the term $\nabla_\theta V(s_{t,k-1})$:
\begin{align*}
\|\nabla_\theta V(s_{t,k})\|_2 &=\|\nabla_\theta V(s_{t,k})-\nabla_\theta V(s_{t,k-1})+\nabla_\theta V(s_{t,k-1})\|_2\nonumber\\
&\leq \|\nabla_\theta V(s_{t,k})-\nabla_\theta V(s_{t,k-1})\|_2+\|\nabla_\theta V(s_{t,k-1})\|_2.
\end{align*}
Recursively, the above arguments lead to the following bound: 
\begin{align}\label{eq:thm-EGF-train:0}
\|\nabla_\theta V(s_{t,k})\|_2\leq \sum_{k'=1}^k \|\nabla_\theta V(s_{t,k'})-\nabla_\theta V(s_{t,k'-1})\|_2 \leq \sqrt{L_V}\sum_{k'=1}^k \|s_{t,k'}-s_{t,k'-1}\|_2, 
\end{align}
where the second inequality is concluded from Assumption~\ref{ass:Lipschitz}. Note that $\|\nabla_\theta V(s_{t,0})\|_2=0$. 

Now, we recall 
\begin{equation}
z_{t,k} :=
\mathbb{E}_{x\sim P_{t+1}^{\rm sem}}[E_{\theta_{t,k}}(x)]-
\mathbb{E}_{x\sim P_t^{\rm sem}}[E_{\theta_{t,k}}(x)].
\end{equation}
By Def.~\ref{def:epoch-based energy gradient state}, and similar arguments in Corollary~\ref{corollary:1}, we redeem the epoch-based gradient energy state
\begin{align}\label{eq:thm-EGF-train:1}
\|z_{t,k'}-z_{t,k'-1}\|_2=\| S^{(k')}_{t,\nabla}\|_2. 
\end{align}
Plugging (\ref{eq:thm-EGF-train:1}) in the upper bound (\ref{eq:thm-EGF-train:0}), we have
\begin{align}\label{V-bound}
\|\nabla_\theta V(s_{t,k})\|_2\leq \sqrt{L_V}\sum_{k'=1}^k \| S^{(k')}_{t,\nabla}\|_2.
\end{align}
Finally, we focus on $\|\nabla_\theta \mathcal{L}_{OW}(\theta^{RL}_{t,k})-\nabla_\theta \mathcal{L}_{OW}(\theta^{GD}_{t,k})\|_2$: Using Assumptions~\ref{ass:Lipschitz} and \ref{ass:Compactness} for $\mathcal{L}_{OW}$, we bound the difference of the gradient of loss for GD and RL-guided optimizers by
\begin{align}\label{L_OW bound}
\|\nabla_\theta \mathcal{L}_{OW}(\theta^{RL}_{t,k})-\nabla_\theta \mathcal{L}_{OW}(\theta^{GD}_{t,k})\|_2\leq 2 \;\sqrt{L} \;r,
\end{align}
where $r$ is the radius of ball $\mathcal{B}(\theta,r)$. We refer Def.~\ref{def:EGF}, apply inverse triangle inequality on $ EGF^{train}_{t,p}$ difference between GD and RL-guided optimizers:
\begin{align}\label{EEGF-difference:1}
\frac{1}{K}\sum_{k=1}^K\|\mathbf{G}_{t,k}^{RL}\|_2-\frac{1}{K}\sum_{k=1}^K\|\mathbf{G}_{t,k}^{GD}\|_2\leq \frac{1}{K}\sum_{k=1}^K \|\mathbf{G}_{t,k}^{RL}-\mathbf{G}_{t,k}^{GD}\|_2. 
\end{align}
Sequentially Plugging \ref{V-bound}) and (\ref{L_OW bound}) in (\ref{eq:thm-EGF-train-00}) and plugging (\ref{eq:thm-EGF-train-00}) in (\ref{EEGF-difference:1}), we conclude
\begin{align}
\frac{1}{K}\sum_{k=1}^K\|\mathbf{G}_{t,k}^{RL}\|_2-\frac{1}{K}\sum_{k=1}^K\|\mathbf{G}_{t,k}^{GD}\|_2&\leq\frac{1}{K} \sum_{k=1}^K \left(2\sqrt{L} r+\eta_{RL} \sqrt{L_V}\sum_{k'=1}^k \| S^{(k')}_{t,\nabla}\|_2\right)\nonumber\\
&\leq 2\sqrt{L} r+\frac{\eta_{RL}\sqrt{L_V}}{K}\sum_{k=1}^K\sum_{k'=1}^k \| S^{(k')}_{t,\nabla}\|_2, 
\end{align}
which proves our claim in (\ref{Eq:training-EGF-Gap}). 
\end{proof}

\subsection{Proof of Theorem~\ref{env-EGF-Gap}}
\label{app:env-EGF-Gap}
\begin{proof}
We start with the gradient of loss function $\mathcal{L}_G$ under GD and RL-guided optimizers for environment $t$:
\begin{align}\label{eq:th.env-EGF:1}
\mathbf{G}_{t}^{GD}:=\nabla_\theta \mathcal{L}_{OW}(\theta^{GD}_{t})\;\; \hbox{and}\;\;\;
\mathbf{G}_{t}^{RL}:=\nabla_\theta \mathcal{L}_{OW}(\theta^{RL}_{t})+\eta_{RL}\nabla_\theta V(s_{t}).
\end{align}
In addition, recall Def.~\ref{def:EGF}, and apply inverse triangle inequality on $ EGF^{env}_{t,p}$ difference between GD and RL-guided optimizers:
\begin{align}\label{env-differece-gradien-flow}
\frac{1}{T}\sum_{t=1}^T\|\mathbf{G}_{t}^{RL}\|_2-\frac{1}{T}\sum_{t=1}^T\|\mathbf{G}_{t}^{GD}\|_2\leq \frac{1}{T}\sum_{t=1}^T \|\mathbf{G}_{t}^{RL}-\mathbf{G}_{t}^{GD}\|_2.
\end{align}
Using $\mathbf{G}_{t}^{GD}$ and $\mathbf{G}_{t}^{RL}$ in (\ref{eq:th.env-EGF:1}), we bound the length of $\mathbf{G}_{t}^{RL}-\mathbf{G}_{t}^{GD}$ by triangle inequality as follows:
\begin{align}\label{eq:thm-env-EGF:00}
\|\mathbf{G}_{t}^{RL}-\mathbf{G}_{t}^{GD}\|_2&=\|\nabla_\theta \mathcal{L}_{OW}(\theta^{RL}_{t})+\eta_{RL}\nabla_\theta V(s_{t})-\nabla_\theta \mathcal{L}_{OW}(\theta^{GD}_{t})\|_2\nonumber\\
&\leq \|\nabla_\theta \mathcal{L}_{OW}(\theta^{RL}_{t})-\nabla_\theta \mathcal{L}_{OW}(\theta^{GD}_{t})\|_2+\eta_{RL}\|\nabla_\theta V(s_{t})\|_2.
\end{align}
Using Assumptions~\ref{ass:Lipschitz} and \ref{ass:Compactness} for $\mathcal{L}_{OW}$ in environment $t$, we have 
\begin{align}\label{L_OW bound:2}
\|\nabla_\theta \mathcal{L}_{OW}(\theta^{RL}_{t})-\nabla_\theta \mathcal{L}_{OW}(\theta^{GD}_{t})\|_2\leq 2 \;\sqrt{L} \;r,
\end{align}
where $r$ in the radius of $\mathcal{B}(\theta_t,r)$ in Assumption~\ref{ass:Compactness}. Next, similar to arguments in Theorem~\ref{training-EGF-Gap}, we have 
\begin{align}\label{eq:thm-env-EGF:3}
&\|\nabla_\theta V(s_{t})\|_2 =\|\nabla_\theta V(s_{t})-\nabla_\theta V(s_{t-1})+\nabla_\theta V(s_{t-1})\|_2\nonumber\\
&\leq \|\nabla_\theta V(s_{t})-\nabla_\theta V(s_{t-1})\|_2+\|\nabla_\theta V(s_{t-1})\|_2
=\sum_{\tau=1}^t \|\nabla_\theta V(s_{\tau})-\nabla_\theta V(s_{\tau-1})\|_2\nonumber\\
&\leq \sum_{\tau=1}^t \sqrt{L_V}\|s_{\tau}-s_{\tau-1}\|_2\;\;\hbox{using  Assumption~\ref{ass:Compactness}}.
\end{align}
Note that $\|\nabla_\theta V(s_{0})\|_2=0$. Recall
\begin{equation}
z_{\tau} :=
\mathbb{E}_{x\sim P_{\tau+1}^{\rm sem}}[E_{\theta_{\tau}}(x)]-
\mathbb{E}_{x\sim P_\tau^{\rm sem}}[E_{\theta_{\tau}}(x)].
\end{equation}
Hence for two environments $\tau-1$ and $\tau$:
\begin{align}
&z_{\tau}-z_{\tau-1} \nonumber\\
    &= \mathbb E_{x\sim {P}^{sem}_{\tau+1}}
\big[E_{\theta_{\tau}}(x)\big] - \mathbb E_{x\sim {P}^{sem}_{\tau}}
\big[E_{\theta_{\tau}}(x)\big] - \mathbb E_{x\sim {P}^{sem}_{\tau}}
\big[E_{\theta_{\tau-1}}(x)\big] 
+\mathbb E_{x\sim {P}^{sem}_{\tau-1}}
\big[E_{\theta_{\tau-1}}(x)\big] \nonumber\\
&=\left(\mathbb E_{x\sim {P}^{sem}_{\tau+1}}
\big[E_{\theta_{\tau}}(x)\big]-\mathbb E_{x\sim {P}^{sem}_{\tau}}
\big[E_{\theta_{\tau-1}}(x)\big]\right)-\left(\mathbb E_{x\sim {P}^{sem}_{\tau}}\big[E_{\theta_{\tau}}(x)\big]-\mathbb E_{x\sim {P}^{sem}_{\tau-1}}\big[E_{\theta_{\tau-1}}(x)\big] \right)\nonumber\\
&=S_{\tau\rightarrow \tau+1}-E_{\tau-1\rightarrow \tau},\label{eq:thm-env-EGF:2}
\end{align}
The last equality of (\ref{eq:thm-env-EGF:2}) is based on Def.~\ref{def:env-based energy state}. Applying Assumptions~\ref{ass:Slow environment-transfer generalization energy} and \ref{ass:Slow environment-based semantic shifted energy}
\begin{align}\label{eq:thm-env-EGF:4}
\|z_{\tau}-z_{\tau-1}\|_2\leq \mathcal{O}(\xi^S_{\tau\rightarrow \tau+1})+\mathcal{O}(\xi^E_{\tau-1\rightarrow \tau})
\end{align}
Going back to (\ref{eq:thm-env-EGF:3}), substituting (\ref{eq:thm-env-EGF:4}) in (\ref{eq:thm-env-EGF:3}) implies 
\begin{align}\label{grad-value:eq2}
\|\nabla_\theta V(s_{t})\|_2 \leq \sqrt{L_V}\sum_{\tau=1}^t \left(\mathcal{O}(\xi^s_{\tau\rightarrow \tau+1})+\mathcal{O}(\xi^E_{\tau-1\rightarrow \tau})\right).
\end{align}
Combining (\ref{L_OW bound:2}) and (\ref{grad-value:eq2}) and use them in (\ref{eq:thm-env-EGF:00}), we have
\begin{align}\label{eq:thm-env-EGF:4}
\|\mathbf{G}_{t}^{RL}-\mathbf{G}_{t}^{GD}\|_2\leq 2 \;\sqrt{L} \;r+\eta_{RL} \sqrt{L_V}\sum_{\tau=1}^t \left(\mathcal{O}(\xi^s_{\tau\rightarrow \tau+1})+\mathcal{O}(\xi^E_{\tau-1\rightarrow \tau})\right).
\end{align}
Hence by using (\ref{eq:thm-env-EGF:4}) in (\ref{env-differece-gradien-flow}), we derive the upper bound below:
\begin{align}
&\frac{1}{T}\sum_{t=1}^T\|\mathbf{G}_{t}^{RL}\|_2-\frac{1}{T}\sum_{t=1}^T\|\mathbf{G}_{t}^{GD}\|_2\leq
2 \;\sqrt{L} \;r+ \frac{\eta_{RL} \sqrt{L_V}}{T}\sum_{t=1}^T \sum_{\tau=1}^t \left(\mathcal{O}(\xi^s_{\tau\rightarrow \tau+1})+\mathcal{O}(\xi^E_{\tau-1\rightarrow \tau})\right),
\end{align}
which proves the theorem and yield our claim in (\ref{eq:env-EGF-Gap}).
\end{proof}